\newcommand{\noun}[1]{\textsc{#1}}
\providecommand{\tabularnewline}{\\}
\theoremstyle{plain}
\newtheorem{thm}{\protect\theoremname}
\theoremstyle{plain}
\newtheorem{lem}[thm]{\protect\lemmaname}
\newenvironment{proof}[1][\protect\proofname]{\par
	\normalfont\topsep6\p@\@plus6\p@\relax
	\trivlist
	\itemindent\parindent
	\item[\hskip\labelsep\scshape #1]\ignorespaces
}{%
	\endtrivlist\@endpefalse
}
\providecommand{\proofname}{Proof}
\title{Improving Ensemble Robustness by Collaboratively \\ Promoting and Demoting Adversarial Robustness}
\author{Anh Bui\textsuperscript{\rm 1}, 
		Trung Le\textsuperscript{\rm 1}, 
		He Zhao\textsuperscript{\rm 1} \\
		Paul Montague\textsuperscript{\rm 2}, 
		Olivier deVel\textsuperscript{\rm 2}, 
		Tamas Abraham\textsuperscript{\rm 2}, 
		Dinh Phung\textsuperscript{\rm 1}}
\providecommand{\lemmaname}{Lemma}
\providecommand{\theoremname}{Theorem}
\begin{document}
\maketitle

\global\long\def\sidenote#1{\marginpar{\small\emph{{\color{Medium}#1}}}}%

\global\long\def\se{\hat{\text{se}}}%
\global\long\def\interior{\text{int}}%
\global\long\def\boundary{\text{bd}}%
\global\long\def\ML{\textsf{ML}}%
\global\long\def\GML{\mathsf{GML}}%
\global\long\def\HMM{\mathsf{HMM}}%
\global\long\def\support{\text{supp}}%
\global\long\def\new{\text{*}}%
\global\long\def\stir{\text{Stirl}}%
\global\long\def\mA{\mathcal{A}}%
\global\long\def\mB{\mathcal{B}}%
\global\long\def\expect{\mathbb{E}}%
\global\long\def\mF{\mathcal{F}}%
\global\long\def\mK{\mathcal{K}}%
\global\long\def\mH{\mathcal{H}}%
\global\long\def\mX{\mathcal{X}}%
\global\long\def\mZ{\mathcal{Z}}%
\global\long\def\mS{\mathcal{S}}%
\global\long\def\Ical{\mathcal{I}}%
\global\long\def\mT{\mathcal{T}}%
\global\long\def\Pcal{\mathcal{P}}%
\global\long\def\dist{d}%
\global\long\def\HX{\entro\left(X\right)}%
\global\long\def\entropyX{\HX}%
\global\long\def\HY{\entro\left(Y\right)}%
\global\long\def\entropyY{\HY}%
\global\long\def\HXY{\entro\left(X,Y\right)}%
\global\long\def\entropyXY{\HXY}%
\global\long\def\mutualXY{\mutual\left(X;Y\right)}%
\global\long\def\mutinfoXY{\mutualXY}%
\global\long\def\given{\mid}%
\global\long\def\gv{\given}%
\global\long\def\goto{\rightarrow}%
\global\long\def\asgoto{\stackrel{a.s.}{\longrightarrow}}%
\global\long\def\pgoto{\stackrel{p}{\longrightarrow}}%
\global\long\def\dgoto{\stackrel{d}{\longrightarrow}}%
\global\long\def\lik{\mathcal{L}}%
\global\long\def\logll{\mathit{l}}%
\global\long\def\bigcdot{\raisebox{-0.5ex}{\scalebox{1.5}{\ensuremath{\cdot}}}}%
\global\long\def\sig{\textrm{sig}}%
\global\long\def\likelihood{\mathcal{L}}%
\global\long\def\vectorize#1{\mathbf{#1}}%

\global\long\def\vt#1{\mathbf{#1}}%
\global\long\def\gvt#1{\boldsymbol{#1}}%
\global\long\def\idp{\ \bot\negthickspace\negthickspace\bot\ }%
\global\long\def\cdp{\idp}%
\global\long\def\das{}%
\global\long\def\id{\mathbb{I}}%
\global\long\def\idarg#1#2{\id\left\{  #1,#2\right\}  }%
\global\long\def\iid{\stackrel{\text{iid}}{\sim}}%
\global\long\def\bzero{\vt 0}%
\global\long\def\bone{\mathbf{1}}%
\global\long\def\a{\mathrm{a}}%
\global\long\def\ba{\mathbf{a}}%
\global\long\def\b{\mathrm{b}}%
\global\long\def\bb{\mathbf{b}}%
\global\long\def\B{\mathrm{B}}%
\global\long\def\boldm{\boldsymbol{m}}%
\global\long\def\c{\mathrm{c}}%
\global\long\def\C{\mathrm{C}}%
\global\long\def\d{\mathrm{d}}%
\global\long\def\D{\mathrm{D}}%
\global\long\def\N{\mathrm{N}}%
\global\long\def\h{\mathrm{h}}%
\global\long\def\H{\mathrm{H}}%
\global\long\def\bH{\mathbf{H}}%
\global\long\def\K{\mathrm{K}}%
\global\long\def\M{\mathrm{M}}%
\global\long\def\bff{\vt f}%
\global\long\def\bx{\mathbf{\mathbf{x}}}%

\global\long\def\bl{\boldsymbol{l}}%
\global\long\def\s{\mathrm{s}}%
\global\long\def\T{\mathrm{T}}%
\global\long\def\bu{\mathbf{u}}%
\global\long\def\v{\mathrm{v}}%
\global\long\def\bv{\mathbf{v}}%
\global\long\def\bo{\boldsymbol{o}}%
\global\long\def\bh{\mathbf{h}}%
\global\long\def\bs{\boldsymbol{s}}%
\global\long\def\x{\mathrm{x}}%
\global\long\def\bx{\mathbf{x}}%
\global\long\def\bz{\mathbf{z}}%
\global\long\def\hbz{\hat{\bz}}%
\global\long\def\z{\mathrm{z}}%
\global\long\def\y{\mathrm{y}}%
\global\long\def\bxnew{\boldsymbol{y}}%
\global\long\def\bX{\boldsymbol{X}}%
\global\long\def\tbx{\tilde{\bx}}%
\global\long\def\by{\boldsymbol{y}}%
\global\long\def\bY{\boldsymbol{Y}}%
\global\long\def\bZ{\boldsymbol{Z}}%
\global\long\def\bU{\boldsymbol{U}}%
\global\long\def\bn{\boldsymbol{n}}%
\global\long\def\bV{\boldsymbol{V}}%
\global\long\def\bI{\boldsymbol{I}}%
\global\long\def\J{\mathrm{J}}%
\global\long\def\bJ{\mathbf{J}}%
\global\long\def\w{\mathrm{w}}%
\global\long\def\bw{\vt w}%
\global\long\def\bW{\mathbf{W}}%
\global\long\def\balpha{\gvt{\alpha}}%
\global\long\def\bdelta{\boldsymbol{\delta}}%
\global\long\def\bsigma{\gvt{\sigma}}%
\global\long\def\bbeta{\gvt{\beta}}%
\global\long\def\bmu{\gvt{\mu}}%
\global\long\def\btheta{\boldsymbol{\theta}}%
\global\long\def\blambda{\boldsymbol{\lambda}}%
\global\long\def\bgamma{\boldsymbol{\gamma}}%
\global\long\def\bpsi{\boldsymbol{\psi}}%
\global\long\def\bphi{\boldsymbol{\phi}}%
\global\long\def\bpi{\boldsymbol{\pi}}%
\global\long\def\bomega{\boldsymbol{\omega}}%
\global\long\def\bepsilon{\boldsymbol{\epsilon}}%
\global\long\def\btau{\boldsymbol{\tau}}%
\global\long\def\bxi{\boldsymbol{\xi}}%
\global\long\def\realset{\mathbb{R}}%
\global\long\def\realn{\realset^{n}}%
\global\long\def\integerset{\mathbb{Z}}%
\global\long\def\natset{\integerset}%
\global\long\def\integer{\integerset}%

\global\long\def\natn{\natset^{n}}%
\global\long\def\rational{\mathbb{Q}}%
\global\long\def\rationaln{\rational^{n}}%
\global\long\def\complexset{\mathbb{C}}%
\global\long\def\comp{\complexset}%

\global\long\def\compl#1{#1^{\text{c}}}%
\global\long\def\and{\cap}%
\global\long\def\compn{\comp^{n}}%
\global\long\def\comb#1#2{\left({#1\atop #2}\right) }%
\global\long\def\nchoosek#1#2{\left({#1\atop #2}\right)}%
\global\long\def\param{\vt w}%
\global\long\def\Param{\Theta}%
\global\long\def\meanparam{\gvt{\mu}}%
\global\long\def\Meanparam{\mathcal{M}}%
\global\long\def\meanmap{\mathbf{m}}%
\global\long\def\logpart{A}%
\global\long\def\simplex{\Delta}%
\global\long\def\simplexn{\simplex^{n}}%
\global\long\def\dirproc{\text{DP}}%
\global\long\def\ggproc{\text{GG}}%
\global\long\def\DP{\text{DP}}%
\global\long\def\ndp{\text{nDP}}%
\global\long\def\hdp{\text{HDP}}%
\global\long\def\gempdf{\text{GEM}}%
\global\long\def\rfs{\text{RFS}}%
\global\long\def\bernrfs{\text{BernoulliRFS}}%
\global\long\def\poissrfs{\text{PoissonRFS}}%
\global\long\def\grad{\gradient}%
\global\long\def\gradient{\nabla}%
\global\long\def\partdev#1#2{\partialdev{#1}{#2}}%
\global\long\def\partialdev#1#2{\frac{\partial#1}{\partial#2}}%
\global\long\def\partddev#1#2{\partialdevdev{#1}{#2}}%
\global\long\def\partialdevdev#1#2{\frac{\partial^{2}#1}{\partial#2\partial#2^{\top}}}%
\global\long\def\closure{\text{cl}}%
\global\long\def\cpr#1#2{\Pr\left(#1\ |\ #2\right)}%
\global\long\def\var{\text{Var}}%
\global\long\def\Var#1{\text{Var}\left[#1\right]}%
\global\long\def\cov{\text{Cov}}%
\global\long\def\Cov#1{\cov\left[ #1 \right]}%
\global\long\def\COV#1#2{\underset{#2}{\cov}\left[ #1 \right]}%
\global\long\def\corr{\text{Corr}}%
\global\long\def\sst{\text{T}}%
\global\long\def\SST{\sst}%
\global\long\def\ess{\mathbb{E}}%

\global\long\def\Ess#1{\ess\left[#1\right]}%
\newcommandx\ESS[2][usedefault, addprefix=\global, 1=]{\underset{#2}{\ess}\left[#1\right]}%
\global\long\def\fisher{\mathcal{F}}%

\global\long\def\bfield{\mathcal{B}}%
\global\long\def\borel{\mathcal{B}}%
\global\long\def\bernpdf{\text{Bernoulli}}%
\global\long\def\betapdf{\text{Beta}}%
\global\long\def\dirpdf{\text{Dir}}%
\global\long\def\gammapdf{\text{Gamma}}%
\global\long\def\gaussden#1#2{\text{Normal}\left(#1, #2 \right) }%
\global\long\def\gauss{\mathbf{N}}%
\global\long\def\gausspdf#1#2#3{\text{Normal}\left( #1 \lcabra{#2, #3}\right) }%
\global\long\def\multpdf{\text{Mult}}%
\global\long\def\poiss{\text{Pois}}%
\global\long\def\poissonpdf{\text{Poisson}}%
\global\long\def\pgpdf{\text{PG}}%
\global\long\def\wshpdf{\text{Wish}}%
\global\long\def\iwshpdf{\text{InvWish}}%
\global\long\def\nwpdf{\text{NW}}%
\global\long\def\niwpdf{\text{NIW}}%
\global\long\def\studentpdf{\text{Student}}%
\global\long\def\unipdf{\text{Uni}}%
\global\long\def\transp#1{\transpose{#1}}%
\global\long\def\transpose#1{#1^{\mathsf{T}}}%
\global\long\def\mgt{\succ}%
\global\long\def\mge{\succeq}%
\global\long\def\idenmat{\mathbf{I}}%
\global\long\def\trace{\mathrm{tr}}%
\global\long\def\argmax#1{\underset{_{#1}}{\text{argmax}} }%
\global\long\def\argmin#1{\underset{_{#1}}{\text{argmin}\ } }%
\global\long\def\diag{\text{diag}}%
\global\long\def\norm{}%
\global\long\def\spn{\text{span}}%
\global\long\def\vtspace{\mathcal{V}}%
\global\long\def\field{\mathcal{F}}%
\global\long\def\ffield{\mathcal{F}}%
\global\long\def\inner#1#2{\left\langle #1,#2\right\rangle }%
\global\long\def\iprod#1#2{\inner{#1}{#2}}%
\global\long\def\dprod#1#2{#1 \cdot#2}%
\global\long\def\norm#1{\left\Vert #1\right\Vert }%
\global\long\def\entro{\mathbb{H}}%
\global\long\def\entropy{\mathbb{H}}%
\global\long\def\Entro#1{\entro\left[#1\right]}%
\global\long\def\Entropy#1{\Entro{#1}}%
\global\long\def\mutinfo{\mathbb{I}}%
\global\long\def\relH{\mathit{D}}%
\global\long\def\reldiv#1#2{\relH\left(#1||#2\right)}%
\global\long\def\KL{KL}%
\global\long\def\KLdiv#1#2{\KL\left(#1\parallel#2\right)}%
\global\long\def\KLdivergence#1#2{\KL\left(#1\ \parallel\ #2\right)}%
\global\long\def\crossH{\mathcal{C}}%
\global\long\def\crossentropy{\mathcal{C}}%
\global\long\def\crossHxy#1#2{\crossentropy\left(#1\parallel#2\right)}%
\global\long\def\breg{\text{BD}}%
\global\long\def\lcabra#1{\left|#1\right.}%
\global\long\def\lbra#1{\lcabra{#1}}%
\global\long\def\rcabra#1{\left.#1\right|}%
\global\long\def\rbra#1{\rcabra{#1}}%

\begin{abstract}
Ensemble-based adversarial training is a principled approach to achieve
robustness against adversarial attacks. An important technique of
this approach is to control the transferability of adversarial examples
among ensemble members. We propose in this work a simple yet effective
strategy to collaborate among committee models of an ensemble model.
This is achieved via the secure and insecure sets defined for each
model member on a given sample, hence help us to quantify and regularize
the transferability. Consequently, our proposed framework provides
the flexibility to reduce the adversarial transferability as well
as to promote the diversity of ensemble members, which are two crucial
factors for better robustness in our ensemble approach. We conduct
extensive and comprehensive experiments to demonstrate that our proposed
method outperforms the state-of-the-art ensemble baselines, at the
same time can detect a wide range of adversarial examples with a nearly
perfect accuracy.\footnote{Our code is available at: https://github.com/tuananhbui89/Crossing-Collaborative-Ensemble} 
\end{abstract}

\section{Introduction}

Deep neural networks have experienced great success in many disciplines
\citep{goodfellow2016deep}, such as computer vision \citep{he2016deep},
natural language processing and speech processing \citep{vaswani2017attention}.
However, even the state-of-the-art models are reported to be vulnerable
to adversarial attacks \citep{biggio2013evasion,goodfellow2014explaining,szegedy2013intriguing,carlini2017towards,madry2017towards,athalye2018obfuscated},
which is of significant concern given the large number of applications
of deep learning in real-world scenarios. It is thus urgent to develop
deep learning models that are robust against different types of adversarial
attacks. To this end, several adversarial defense methods have been
developed but typically addressing the robustness within a single
model (e.g., \citealp{papernot2016limitations,moosavi2016deepfool,madry2017towards,qin2019adversarial,shafahi2019adversarial}).
To cater for more diverse types of attacks, recent work, notably \citep{he2017adversarial,tramer2017ensemble,strauss2017ensemble,liu2018towards,pang2019improving},
 has shown that ensemble learning can strengthen robustness significantly.

Despite initial success, key principles for ensemble-based adversarial
training (EAT) largely remain open. One crucial challenge is to achieve
minimum `transferability'  between committee members to increase
robustness for the overall ensemble model \citep{papernot2016transferability,liu2016delving,tramer2017ensemble,pang2019improving,kariyappa2019improving}.
In \citep{kariyappa2019improving}, robustness was achieved by aligning
the gradient of committee members to be diametrically opposed, hence
reducing the shared adversarial spaces \citep{tramer2017space}, or
the transferability. However, the method in \citep{kariyappa2019improving}
was designed for black-box attacks, thus still vulnerable to white-box
attacks. Furthermore, attempting to achieve gradient alignment is
unreliable for high-dimensional datasets and it is difficult to extend
for ensemble with more than two committee members. More recently \citep{pang2019improving}
proposed to promote the diversity of non-maximal predictions of the
committee members (i.e., the diversity among softmax probabilities
except the highest ones) to reduce the adversarial transferability
among them. Nonetheless, the central concept of transferability has
not been systematically addressed.

Our proposed work here will first make the concept of adversarial
transferability concrete via the definitions of secure and insecure
sets. To reduce the adversarial transferability and increase the
model diversity, we aim to make the insecure sets of the committee
models as disjoint as possible (i.e., lessening the overlapping of
those regions) and challenge those committee members with divergent
sets of adversarial examples. In addition, we observe that lessening
the adversarial transferability alone is not sufficient to ensure
accurate predictions of the ensemble model because the committee member
that offers inaccurate predictions might dominate the final decisions.
With this in mind, we propose to realize what we call a ``transferring
flow'' by collaborating robustness promoting and demoting operations.
Our key principle to coordinate the promoting and demoting operations
is to promote the prediction of one model on a given adversarial example
and to demote the prediction of another model on this example so as
to maximally lessen the negative impact of the wrong predictions and
ensure the correct predictions of the ensemble model. Moreover, different
from other works \citep{strauss2017ensemble,pang2019improving,kariyappa2019improving}
which only consider adversarial examples of the ensemble model, the
committee members in our ensemble model are exposed to various divergent
adversarial example sets, which inspire them to become gradually more
divergent. Interestingly, by strengthening demoting operations, our
method is capable to assist better detection of adversarial examples.
In brief, our contributions in this work include:
\begin{itemize}
\item We propose a simple but efficient collaboration strategy to reduce
the transferability among ensemble members. 
\item We propose two variants of our method: the robust oriented variant,
which helps to improve the adversarial robustness and the detection
oriented variant, which can detect adversarial examples with high
predictive performance. 
\item We conduct extensive and comprehensive experiments to demonstrate
the improvement of our proposed method over the state-of-the-art defense
methods. 
\item We provide a further understanding of the relationship between the
transferability and the overall robustness in ensemble learning context. 
\end{itemize}

\section{Our Proposed Method}

In this section, we present our ensemble collaboration strategy,
which allows us to collaborate many committee models for improving
the ensemble robustness. We start with the definitions and some key
properties of secure and insecure sets which later support us in devising
promoting and demoting operations for collaborating the committee
models to achieve the ensemble robustness. It is worth noting that
our ensemble strategy is applicable for ensembling an arbitrary number
of committee models; here we focus on presenting the key theories,
principles, and operations for the canonical case of ensembling two
models for better readability.

\subsection{Secure and Insecure Sets \label{subsec:sec_insec_sets}}

Consider a classification problem on a dataset $\mathcal{D}$ with
$M$ classes and a pair $(\bx,\by)$ that represents a data example
$\bx$ and its true label $\by$ which is sampled from the dataset
$\mathcal{D}$. Given a model $f$, the crucial aim of defense is
to make $f$ robust by giving consistently accurate predictions over
a ball, $\mathcal{B}\left(\bx,\epsilon\right)\coloneqq\left\{ \bx':\norm{\bx'-\bx}\leq\epsilon\right\} $
around a benign data example $\bx$, for every possible $\bx$ in
the dataset $\mathcal{D}$ and the distortion boundary $\epsilon$.
To further clarify and motivate our theory, we define
\begin{align*}
\mathcal{B}_{\text{secure}}\left(\bx,\by,f,\epsilon\right) & \coloneqq\left\{ \bx'\in\mathcal{B}\left(\bx,\epsilon\right):\text{argmax}_{i}f_{i}\left(\bx'\right)=\by\right\} ,\\
\mathcal{B}_{\text{insecure}}\left(\bx,\by,f,\epsilon\right) & \coloneqq\left\{ \bx'\in\mathcal{B}\left(\bx,\epsilon\right):\text{argmax}_{i}f_{i}\left(\bx'\right)\neq\by\right\} .
\end{align*}

Intuitively, we define a \emph{secure} set $\mathcal{B}_{\text{secure}}\left(\bx,\by,f,\epsilon\right)$
as the set of elements in the ball $\mathcal{B}\left(\bx,\epsilon\right)$
for which the classifier $f$ makes the correct prediction. In addition,
we define the \emph{insecure} set $\mathcal{B}_{\text{insecure}}\left(\bx,\by,f,\epsilon\right)$
as the set of elements in the ball $\mathcal{B}\left(\bx,\epsilon\right)$
for which $f$  predicts differently from the true label $\by$.
By definition, the secure set is the complement of the insecure set,
and $\mathcal{B}\left(\bx,\varepsilon\right)=\mathcal{B}_{\text{secure}}\left(\bx,\by,f,\varepsilon\right)\bigcup\mathcal{B}_{\text{insecure}}\left(\bx,\by,f,\varepsilon\right)$.
It is clear that the aim of improving adversarial robustness is to
train the classifier $f$ in such the way that $\mathcal{B}_{\text{insecure}}\left(\bx,\by,f,\epsilon\right)$
is either as small as possible (ideally, $\mathcal{B}_{\text{insecure}}\left(\bx,\by,f,\epsilon\right)=\emptyset,\,\forall\bx\in\mathcal{D}$)
or makes an adversary hard to generate adversarial examples in it.
The following simple lemma (see the proof in the supplementary material)
shows the connection between those two kinds of sets and the robustness
of the ensemble model and facilitates the development of our proposed
method.
\begin{lem}
\label{lemma:2}Let us define $f^{en}\left(\cdot\right)=\frac{1}{2}f^{1}\left(\cdot\right)+\frac{1}{2}f^{2}\left(\cdot\right)$
for two given models $f^{1}$ and $f^{2}$. If $f^{1}$ and $f^{2}$
predict an example $\bx$ accurately, we have the following:

i) $\mathcal{B}_{\text{insecure}}\left(\bx,\by,f^{en},\epsilon\right)\subset\mathcal{B}_{\text{insecure}}\left(\bx,\by,f^{1},\epsilon\right)\cup\mathcal{B}_{\text{insecure}}\left(\bx,\by,f^{2},\epsilon\right).$

ii) $\mathcal{B}_{\text{secure}}\left(\bx,\by,f^{1},\epsilon\right)\cap\mathcal{B}_{\text{secure}}\left(\bx,\by,f^{2},\epsilon\right)\subset\mathcal{B}_{\text{secure}}\left(\bx,\by,f^{en},\epsilon\right).$
\end{lem}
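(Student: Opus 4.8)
The plan is to first notice that statements (i) and (ii) are logically equivalent, so that a single argument settles both. Every set appearing in the lemma is a subset of the ball $\mathcal{B}(\bx,\epsilon)$, and by definition $\mathcal{B}_{\text{insecure}}(\bx,\by,f,\epsilon)$ is exactly the complement of $\mathcal{B}_{\text{secure}}(\bx,\by,f,\epsilon)$ inside $\mathcal{B}(\bx,\epsilon)$. Taking complements within $\mathcal{B}(\bx,\epsilon)$ on both sides of the inclusion in (i) and applying De Morgan converts the union into an intersection and reverses the inclusion, which is precisely (ii). Hence I would prove (ii) and obtain (i) for free.

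To prove (ii), I would fix an arbitrary $\bx' \in \mathcal{B}_{\text{secure}}(\bx,\by,f^1,\epsilon) \cap \mathcal{B}_{\text{secure}}(\bx,\by,f^2,\epsilon)$. Unwinding the definitions, $\bx' \in \mathcal{B}(\bx,\epsilon)$ and, for each $k \in \{1,2\}$, $f^k_{\by}(\bx') \geq f^k_j(\bx')$ holds for every class $j$. For each fixed $j$, averaging the two inequalities gives
\[
f^{en}_{\by}(\bx') = \tfrac{1}{2} f^1_{\by}(\bx') + \tfrac{1}{2} f^2_{\by}(\bx') \;\geq\; \tfrac{1}{2} f^1_j(\bx') + \tfrac{1}{2} f^2_j(\bx') = f^{en}_j(\bx'),
\]
so $\by$ maximizes $f^{en}_i(\bx')$ over $i$, i.e., $\bx' \in \mathcal{B}_{\text{secure}}(\bx,\by,f^{en},\epsilon)$. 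This is (ii), and with the equivalence above, also (i).

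The only delicate point --- and the main obstacle, such as it is --- is the convention for $\text{argmax}$ when the maximum is attained by several classes. Under the natural reading that $\by$ simply attains the (possibly non-unique) maximum, the averaging step preserves the property as written. If one instead insists on a strict winner, note that the strict inequalities $f^k_{\by}(\bx') > f^k_j(\bx')$ ($k=1,2$) average to a strict inequality for $f^{en}$, so the conclusion is unaffected. I would also remark that the hypothesis that $f^1$ and $f^2$ classify $\bx$ correctly plays no role in the inclusions themselves; it only rules out the degenerate situation in which the secure sets are empty, since $\bx \in \mathcal{B}(\bx,\epsilon)$ then lies in all of them.
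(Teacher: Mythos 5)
Your proposal is correct and follows essentially the same route as the paper's own proof: reduce (i) to (ii) by complementation within the ball, then average the two coordinatewise inequalities $f^k_{\by}(\bx')\geq f^k_j(\bx')$ to conclude that $\by$ maximizes $f^{en}(\bx')$. Your additional remarks on the argmax tie-breaking convention and on the (unused) hypothesis that $f^1,f^2$ classify $\bx$ correctly are accurate refinements that the paper leaves implicit.
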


\subsection{Dual Collaborative Ensemble\label{subsec:DCE}}

\subsubsection{Transferring Flow. \label{subsec:Transferring-Flow}}

Consider the canonical case of an ensemble consisting of two models:
$f^{en}\left(\cdot\right)=\frac{1}{2}f^{1}\left(\cdot\right)+\frac{1}{2}f^{2}\left(\cdot\right)$,
where $f^{en}$ is the ensemble model and $\{f^{1},f^{2}\}$ is the
set of ensemble committee (or the committee). Based on the definitions
of secure and insecure sets, an arbitrary adversarial example $\bx_{a}$
must lie in one of four subsets as shown in Table \ref{tab:joint-subset}.
Let us further clarify these subsets. In the first subset $S_{11}=\mathcal{B}_{\text{secure}}(\bx,\by,f^{1},\epsilon)\bigcap\mathcal{B}_{\text{secure}}(\bx,\by,f^{2},\epsilon)$,
the example $\bx_{a}$ is predicted correctly by both models, hence
also by the ensemble model $f^{en}$ (Lemma \ref{lemma:2} (ii)).
The subsets $S_{10},S_{01}$ are the intersection of a secure set
of one model and an insecure set of another model, hence an example
of two sets is predicted correctly by one model and incorrectly by
the other. Lastly, in the subset $S_{00}=\mathcal{B}_{\text{insecure}}(\bx,\by,f^{1},\epsilon)\bigcap\mathcal{B}_{\text{insecure}}(\bx,\by,f^{2},\epsilon)$,
both models offer predictions other than the true label, but there
is also no guarantee that their incorrect predictions are in the same
class. There is still a chance that the incorrect prediction in subset
$S_{10}$, $S_{01}$ dominates the correct ones, which leads to the
incorrect prediction on average. Therefore, the insecure region of
the overall ensemble should be related to the union $S_{10}\cup S_{01}\cup S_{00}$
or the total volume (i.e., $|S_{10}|+|S_{01}|+|S_{00}|$) of the subsets
$S_{10},S_{01},S_{00}$. 

As the result, to obtain a robust ensemble model, we need to maintain
the subset $S_{00}$ as small as possible, which is in turn equivalent
to making the insecure regions of the two models as disjoint as much
as possible (i.e., concurred with Lemma \ref{lemma:2} (i)). For the
data points in either $S_{10}$ or $S_{01}$, we need to increase
the chance that the correct predictions dominate the incorrect ones.
Our approach is to encourage adversarial examples inside $S_{00}$
to move to the subsets $S_{10},S_{01}$ during the course of training,
and those of $S_{10},S_{01}$ to move to the subset $S_{11}$. We
term this movement as the \emph{transferring flow}, which is described
in Table \ref{tab:joint-subset}. In what follows, we present how
to implement the transferring flow for our ensemble model. 

\subsubsection{Promoting Adversarial Robustness (PO).\label{subsec:Promoting-and-Demoting}}

We refer to promoting adversarial robustness as an operation to leverage
the information of an example $\bx_{a}^{i}$ (adversarial example
of model $f^{i}$) for improving the robustness of a model $f^{j}$
($i,j$ can be different). There are several adversarial defense methods
that can be applied to promote adversarial robustness, notably \citep{madry2017towards,Zhang2019theoretically,qin2019adversarial}.
In this work, to promote the adversarial robustness of a given adversarial
example $\bx_{a}^{i}$ w.r.t the model $f^{j}$, we use adversarial
training \citep{madry2017towards} by minimizing the cross-entropy
loss w.r.t the true label as $\min\;\mathcal{C}\left(f^{j}(\bx_{a}^{i}),\by\right)$.
After undertaking this PO, $\bx_{a}^{i}$ is expected to move to the
secure set $\mathcal{B}_{\text{secure}}\left(\bx,\by,f^{j},\epsilon\right)$.
We introduce two types of PO: direct PO (dPO) when $i=j$ and crossing
PO (cPO) when $i\neq j$. 

\begin{table}
\caption{Four subsets of the ensemble model and the transferring flow (arrows)\label{tab:joint-subset}}

\centering{}\resizebox{0.45\textwidth}{!}{\centering\setlength{\tabcolsep}{2pt}
\begin{tabular}{c|ccc}
\hline 
 & $\bx_{a}\in\mathcal{B}_{\text{secure}}(\bx,\by,f^{1},\epsilon)$ &  & $\bx_{a}\in\mathcal{B}_{\text{insecure}}(\bx,\by,f^{1},\epsilon)$\tabularnewline
\hline 
$\bx_{a}\in\mathcal{B}_{\text{secure}}(\bx,\by,f^{2},\epsilon)$ & $S_{11}$ & $\Leftarrow$ & $S_{01}$\tabularnewline
 & $\Uparrow$ &  & $\Uparrow$\tabularnewline
$\bx_{a}\in\mathcal{B}_{\text{insecure}}(\bx,\by,f^{2},\epsilon)$ & $S_{10}$ & $\Leftarrow$ & $S_{00}$\tabularnewline
\hline 
\end{tabular}}
\end{table}

\subsubsection{Demoting Adversarial Robustness (DO). }

In contrast to promoting adversarial robustness, we refer to demoting
adversarial robustness as an operation to sacrifice the robustness
of a model for an example $\bx_{a}^{i}$ (adversarial example of model
$f^{i}$). Here, we demote the adversarial robustness of a given adversarial
example $\bx_{a}^{i}$ w.r.t the model $f^{j}$ by $\max\;\mathcal{H}\left(f^{j}(\bx_{a}^{i})\right)$
where $\mathcal{H}$ is the entropy. Without any further knowledge,
the prediction is likely uniformly distributed, hence the example
$\bx_{a}^{i}$ likely falls into the insecure set $\mathcal{B}_{\text{insecure}}\left(\bx,\by,f^{j},\epsilon\right)$
instead of the secure set $\mathcal{B}_{\text{secure}}\left(\bx,\by,f^{j},\epsilon\right)$.

\subsubsection{Collaboration of the Promoting and Demoting Operations.}

We now present how to coordinate PO/DO to enforce the transferring
flow for enhancing the adversarial robustness of the ensemble model
in the canonical case of a committee of two members $\{f^{1},f^{2}\}$,
parameterized by $\theta_{1}$ and $\theta_{2}$. Let $\bx_{a}^{1}$
and $\bx_{a}^{2}$ be white-box adversarial examples of $f^{1}$ and
$f^{2}$ respectively. With a strong adversary, we can assume that
$\bx_{a}^{1}\in\mathcal{B}_{\text{insecure}}(\bx,\by,f^{1},\epsilon)$
(i.e., $\bx_{a}^{1}\in S_{01}\cup S_{00}$) and $\bx_{a}^{2}\in\mathcal{B}_{\text{insecure}}(\bx,\by,f^{2},\epsilon)$
(i.e., $\bx_{a}^{2}\in S_{10}\cup S_{00}$). For ease of comprehensibility,
we present the treatment for $\bx_{a}^{1}$ and the same treatment
is applied to $\bx_{a}^{2}$. To strengthen model $f^{1}$, we always
use $\bx_{a}^{1}$ to promote the robustness of model $f^{1}$ by
minimizing the cross-entropy loss $\mathcal{C}\left(f^{1}(\bx_{a}^{1}),\by\right)$
(i.e., flow $S_{01}\Rightarrow S_{11}$ or $S_{00}\Rightarrow S_{10}$).
Meanwhile, we consider two cases of $\bx_{a}^{1}$ w.r.t model $f^{2}$:
i) being correctly predicted by $f^{2}$ (i.e., $\bx_{a}^{1}\in S_{01}$)
and ii) being incorrectly predicted by $f^{2}$ (i.e., $\bx_{a}^{1}\in S_{00}$).
For the first case, we use $\bx_{a}^{1}$ to promote model $f^{2}$
to make sure $\bx_{a}^{1}$ stays in the secure set of model $f^{2}$
(i.e., $S_{11}\cup S_{01}$). For the second case, we demote $\bx_{a}^{1}$
w.r.t $f^{2}$ by maximizing the entropy $\mathcal{H}\left(f^{2}(\bx_{a}^{1})\right)$
in order to keep $\bx_{a}^{1}$ in the insecure set of model $f^{2}$
(i.e., $S_{10}\cup S_{00}$). 

Therefore, with the collaboration of two models $f^{1}$ and $f^{2}$
on the same example $\bx_{a}^{1}$, we deploy either flow $S_{01}\Rightarrow S_{11}$
or $S_{00}\Rightarrow S_{10}$ depending on the scenario of $\bx_{a}^{1}$.
It is worth noting that DO  encourages $f^{2}(\bx_{a}^{1})$ to be
close to the uniform prediction, hence causing a minimal effect on
the ensemble prediction $f^{en}\left(\bx_{a}^{1}\right)$. As a consequence,
$f^{en}\left(\bx_{a}^{1}\right)=\frac{1}{2}\left(f^{1}\left(\bx_{a}^{1}\right)+f^{2}\left(\bx_{a}^{1}\right)\right)$
is dominated by $f^{1}\left(\bx_{a}^{1}\right)$, which likely offers
a correct prediction via the corresponding PO: $\min\,\mathcal{C}\left(f^{1}(\bx_{a}^{1}),\by\right)$.
We summarize the PO/DO to deploy the transferring flow in Table \ref{tab:actions}.

\begin{table}
\caption{Promoting and demoting operations for the transferring flow\label{tab:actions}}

\centering{}\resizebox{0.3\textwidth}{!}{\centering\setlength{\tabcolsep}{2pt}
\begin{tabular}{c|c|c}
\hline 
Scenario & $f^{1}$ & $f^{2}$\tabularnewline
\hline 
$\bx_{a}^{1}\in S_{01}$ & $\text{min}\;\mathcal{C}\left(f^{1}(\bx_{a}^{1}),\by\right)$ & $\text{min}\;\mathcal{C}\left(f^{2}(\bx_{a}^{1}),\by\right)$ \tabularnewline
$\bx_{a}^{1}\in S_{00}$ & $\text{min}\;\mathcal{C}\left(f^{1}(\bx_{a}^{1}),\by\right)$  & $\text{max}\;\mathcal{H}\left(f^{2}(\bx_{a}^{1})\right)$ \tabularnewline
$\bx_{a}^{2}\in S_{10}$ & $\text{min}\;\mathcal{C}\left(f^{1}(\bx_{a}^{2}),\by\right)$ & $\text{min}\;\mathcal{C}\left(f^{2}(\bx_{a}^{2}),\by\right)$ \tabularnewline
$\bx_{a}^{2}\in S_{00}$ & $\text{max}\;\mathcal{H}\left(f^{1}(\bx_{a}^{2})\right)$ & $\text{min}\;\mathcal{C}\left(f^{2}(\bx_{a}^{2}),\by\right)$ \tabularnewline
\hline 
\end{tabular}}
\end{table}

The objective functions for model $f^{1}$ and $f^{2}$ to deploy
the transferring flow are: 

{\small{}
\begin{align}
\mathcal{L}(\bx,\by,\theta_{1}) & =\mathcal{C}\left(f^{1}(\bx),\by\right)+\mathcal{C}\left(f^{1}(\bx_{a}^{1}),\by\right)\nonumber \\
 & +\lambda_{pm}\mathbb{I}\left(f^{1}(\bx_{a}^{2}),\by\right)\mathcal{C}(f^{1}(\bx_{a}^{2}),\by)\nonumber \\
 & -\lambda_{dm}\left(1-\mathbb{I}\left(f^{1}(\bx_{a}^{2}),\by\right)\right)\mathcal{H}\left(f^{1}(\bx_{a}^{2})\right),\\
\mathcal{L}(\bx,\by,\theta_{2}) & =\mathcal{C}\left(f^{2}(\bx),\by\right)+\mathcal{C}\left(f^{2}(\bx_{a}^{2}),\by\right)\nonumber \\
 & +\lambda_{pm}\mathbb{I}\left(f^{2}(\bx_{a}^{1}),\by\right)\mathcal{C}(f^{2}(\bx_{a}^{1}),\by)\nonumber \\
 & -\lambda_{dm}\left(1-\mathbb{I}\left(f^{2}(\bx_{a}^{1}),\by\right)\right)\mathcal{H}\left(f^{2}(\bx_{a}^{1})\right).
\end{align}
}{\small\par}

where $\lambda_{pm}$ and $\lambda_{dm}$ are the hyper-parameters
for promoting and demoting effects, respectively, and $\mathbb{I}\left(f^{1}(\bx_{a}^{2}),\by\right)$
is the indicator to indicate whether $\bx_{a}^{2}$ is predicted correctly
(i.e., $\mathbb{I}=1$, hence $\bx_{a}^{2}\in S_{10}$) or incorrectly
(i.e., $\mathbb{I}=0$, hence $\bx_{a}^{2}\in S_{00}$) by $f^{1}$,
which helps to switch on/off the cPO/DO for model $f^{1}$.

For the final objective function, we approximate the hard indicator
$\mathbb{I}\left(f^{1}(\bx_{a}^{2}),\by\right)$ by the soft version
$f_{\by}^{1}(\bx_{a}^{2})=p\left(\by\mid\bx_{a}^{2},f^{1}\right)$,
which represents the probability the model $f^{1}$ assigning $\bx_{a}^{2}$
to the label $\by$. We hence arrive at the following objective functions
for both $f^{1}$ and $f^{2}$, respectively. 

{\small{}
\begin{align}
\mathcal{L}(\bx,\by,\theta_{1}) & =\mathcal{C}\left(f^{1}(\bx),\by\right)+\mathcal{C}\left(f^{1}(\bx_{a}^{1}),\by\right)\nonumber \\
 & +\lambda_{pm}f_{y}^{1}(\bx_{a}^{2})\mathcal{C}(f^{1}(\bx_{a}^{2}),\by)\nonumber \\
 & -\lambda_{dm}\left(1-f_{y}^{1}(\bx_{a}^{2})\right)\mathcal{H}\left(f^{1}(\bx_{a}^{2})\right),\\
\mathcal{L}(\bx,\by,\theta_{2}) & =\mathcal{C}\left(f^{2}(\bx),\by\right)+\mathcal{C}\left(f^{2}(\bx_{a}^{2}),\by\right)\nonumber \\
 & +\lambda_{pm}f_{y}^{2}(\bx_{a}^{1})\mathcal{C}(f^{2}(\bx_{a}^{1}),\by)\nonumber \\
 & -\lambda_{dm}\left(1-f_{y}^{2}(\bx_{a}^{1})\right)\mathcal{H}\left(f^{2}(\bx_{a}^{1})\right).
\end{align}
}{\small\par}

We note that in our implementation, the soft indicators $f_{y}^{1}(\bx_{a}^{2})$
and $f_{y}^{2}(\bx_{a}^{1})$ are used as values by performing a stopping
gradient to prevent the back-propagation process to go inside them
for further updating $f^{1}$ and $f^{2}$. 

\subsection{Crossing Collaborative Ensemble \label{subsec:Crossing-Collaborative-Ensemble}}

We now extend our collaboration strategy to enable us to ensemble
many individual members, which we term as a \emph{Crossing Collaborative
Ensemble (CCE)}. Specifically, given an ensemble of $N$ members $f^{en}\left(\cdot\right)=\frac{1}{N}\sum_{n=1}^{N}f^{n}\left(\cdot\right)$
parameterized by $\theta_{n}$, the loss function for a model $f^{n},n\in[1,N]$
as follow:

{\small{}
\begin{align}
\mathcal{L}^{n}(\bx,\by,\theta_{n}) & =\mathcal{{C}}\left(f^{n}(\bx),\by\right)+\mathcal{{C}}\left(f^{n}(\bx_{a}^{n}),\by\right)\nonumber \\
 & +\frac{1}{N-1}\underset{i\neq n}{\sum}\Biggl(\lambda_{pm}f_{y}^{n}(\bx_{a}^{i})\mathcal{C}(f^{n}(\bx_{a}^{i}),\by)\nonumber \\
 & -\lambda_{dm}\left(1-f_{y}^{n}(\bx_{a}^{i})\right)\mathcal{{H}}\left(f^{n}(\bx_{a}^{i})\right)\Biggr).
\end{align}
}{\small\par}

It appears from the above loss that we encourage each individual model
to (i) minimize the loss of the adversarial example itself for improving
its robustness (dPO) and (ii) promoting or demoting its robustness
(cPO/DO) with other adversarial examples depending on the soft indicator. 

\paragraph{Connections to Traditional Ensemble Learning.}

Firstly, in our method, N members $\{f^{n}\}$ are reinforced with
the joint of $N+1$ data sources: clean data $\{\bx\}$ and $N$ adversarial
examples $\{\bx_{a}^{n}\}_{n=1}^{N}$. However, depending on different
scenarios, they have the same task (PO-PO) or opposite tasks (PO-DO)
on the same adversarial set $\{\bx_{a}^{n}\}$. Our approach can be
linked to the bagging technique in the literature, in which each classifier
was trained on different sets of data. Secondly, by assigning opposite
tasks for ensemble members, our method produces a negative correlation
which was described in \citep{liu1999ensemble,kuncheva2003measures,bagnall2017training}.
It has been claimed that negative relationship among ensemble members
can further improve the ensemble accuracy better than the independent
correlation.

\section{Experiments}

In this section, we first introduce the experimental setting for adversarial
defenses and attackers followed by an extensive evaluation to compare
our method with state-of-the-art adversarial defenses. We show that
our method surpasses these methods for common benchmark datasets.
Next, we provide an ablation study to understand the transferability
among ensemble members of adversarial examples. Finally, we show that
 our method not only detects adversarial examples accurately and
consistently but also predicts benign examples with a significant
improvement.  

\subsection{Experimental Setting}

\subsubsection{General Setting. \label{subsec:general_setting}}

We use CIFAR10 and CIFAR100 as the benchmark datasets in our experiment.\footnote{Recently, \citep{tsipras2020imagenet} found the labeling issue in
the ImageNet dataset, which highly affects the fairness of robustness
evaluation on this dataset.} Both datasets have 50,000 training images and 10,000 test images.
The inputs were normalized to $[0,1]$. We apply random horizontal
flips and random shifts with scale $10\%$ for data augmentation as
used in \citep{pang2019improving}. We use both standard CNN architecture
and ResNet architecture \citep{he2016deep} in our experiment. The
architecture and training setting for each dataset are provided in
our supplementary material. 

\subsubsection{Crafting Adversarial Examples for Defenders.}

In our experiments, we use PGD $\{k,\epsilon,\eta,l_{\infty}\}$ as
the common adversary to generate adversarial examples for the adversarial
training of all defenders where $k$ is the iteration steps, $\epsilon$
is the distortion bound and $\eta$ is the step size. Specifically,
the configuration for the CIFAR10 dataset is $k=10,\epsilon=8/255,\eta=2/255$
and that for the CIFAR100 dataset is $k=10,\epsilon=0.01,\eta=0.001$.
For the CIFAR10 dataset with ResNet architecture, we use the same
setting in \citep{pang2019improving} which is $k=10,\epsilon\sim U(0.01,0.05),\eta=\epsilon/10$.

\subsubsection{Baseline Methods.}

Because the model capacity has significant impact on the inference
performance, therefore, for a fair comparison, we compare our method
with the start-of-the-art ensemble-based method, i.e., ADV-EN \citep{madry2017towards}
and ADP \citep{pang2019improving}, which have the same number of
committee members and also the member's architecture. More specifically,
ADV-EN is the variant of PGD adversarial training method (ADV) in
the context of ensemble learning, in which the entire ensemble model
is treated as one unified model applied with adversarial training.
We also compare with the ADV method which is adversarial training
on a single model. For ADP, we choose the best setting $ADP_{2,0.5}$
with adversarial version, which was reported in the paper \citep{pang2019improving},
and use the official code.\footnote{https://github.com/P2333/Adaptive-Diversity-Promoting}

Throughout our experiments, we use two variants of our method: (i)
Robustness Mode (i.e., CCE-RM) for which we set $\lambda_{pm}=\lambda_{dm}=1$
and (ii) Detection Mode (i.e., CCE-DM) for which we disable cPO ($\lambda_{pm}=0$)
and strengthen DO (i.e., $\lambda_{dm}=5$).

\subsubsection{Attack Setting.}

We use different state-of-the-art attacks to evaluate the defense
methods including: 

(i) \textbf{Gradient based attacks} (with \emph{cleverhans}\footnote{https://github.com/tensorflow/cleverhans}
lib). We use PGD \citep{madry2017towards}, the Basic Iterative Method
(BIM) \citep{kurakin2016a} and the Momentum Iterative Method (MIM)
\citep{dong2018boosting}. They share the same hyper-parameters configuration,
i.e., $\{k,\epsilon,\eta\}$, which is described in each individual
experiment. 

(ii) \textbf{B\&B attack} \citep{brendel2019accurate} (with \emph{foolbox}\footnote{https://foolbox.readthedocs.io/en/stable/}
lib) which is a decision based attack. We argue that the B\&B attack
setting in the paper of \citep{tramer2020adaptive} may not be appropriate
to evaluate the ADP method. It is because the ADP method used PGD
$(\epsilon\sim U(0.01,0.05),k=10)$ for its adversarial training,
while B\&B attack used PGD $(\epsilon=0.15,k=20)$ as an initialized
attack which is much stronger than the defense capacity. More specifically,
the initialized PGD attack alone can reduce the accuracy to 0.1\%.
Therefore, B\&B attack contributes very little to the final attack
performance. To have a fair evaluation, we use two initialized attacks
with lower strength: PGD1 $(\epsilon=8/255,\eta=2/255,k=20)$ and
PGD2 $(\epsilon=16/255,\eta=2/255,k=20)$ then apply B\&B attack with
100 steps and repeat for three times. It is worth noting that, PGD2
is still much stronger than the defense capacity, however, we use
this setting to mimic the evaluation in the paper of \citep{tramer2020adaptive}.

(iii) \textbf{Auto-Attack} \citep{croce2020reliable} (with the official
implementation\footnote{https://github.com/fra31/auto-attack}) which
is an ensemble based attack. We use $\epsilon=8/255$ for the CIFAR10
dataset and $\epsilon=0.01$ for the CIFAR100 dataset, both with standard
version which is an ensemble of four different attacks.

(iv) \textbf{SPSA attack} \citep{uesato2018adversarial} (with \emph{cleverhans}
lib) which is a gradient-free optimization method. We use $\epsilon=8/255$
for the CIFAR10 dataset and $\epsilon=0.01$ for the CIFAR100 dataset,
both with 50 steps\textbf{.}

The distortion metric we use in our experiments is $l_{\infty}$ for
all measures. We use the full test set for the attacks (i) and 1000
test samples for the attacks (ii-iv).

\subsection{Robustness Evaluation }

We conduct extensive experiments on the CIFAR10 and CIFAR100 datasets
to compare our method with the other methods. We consider the ensemble
of both two and three committee members (denoted by a subscript number
in each method). It can be observed from the experimental results
in Table {[}\ref{tab:result-resnet}, \ref{tab:result-c10}, \ref{tab:result-c100}{]}
that: 

(i) There is a gap of 2\%$\sim$3\% when comparing $\text{ADV-EN}_{3}$
with $\text{ADV}_{1}$ showing that increasing model capacity (by
increasing number of ensemble member) can improve the robustness of
the model. 

(ii) There is a gap of 3\%$\sim$4\% between $\text{ADP}_{3}$ and
$\text{ADV}_{1}$, and especially, a gap of 7\%$\sim$8\% when comparing
our $\text{CCE-RM}_{3}$ with $\text{ADV}_{1}$, which shows the potential
of the ensemble learning to tackle with the adversarial attacks. 

(iii) With the same model capacity, our CCE-RM is consistently the
best with all attacks and in some attacks, ours surpasses other baselines
in a large margin (4\%$\sim$5\%). 

(iv) There is a gap of $3\%$ between $\text{CCE-RM}_{3}$ and $\text{CCE-RM}_{2}$,
which is larger than the gap of $1\%$ between $\text{ADP}_{3}$ and
$\text{ADP}_{2}$ or that of $\text{ADV-EN}_{3}$ and $\text{ADV-EN}_{2}$,
showing that our method collaborates members better and gets more
benefit from ensembling more committee members. 

\begin{table}
\caption{Robustness evaluation on the CIFAR10 dataset with ResNet architecture.
For the PGD attack, we use $\epsilon=8/255,\eta=2/255$. ({*}) The
low robust accuracies (even with standard method $\text{ADV}$) because
the attack strength of PGD2 is double of the defense capacity, which
makes the adversarial examples to be recognizable. \label{tab:result-resnet}}

\centering{}\resizebox{0.45\textwidth}{!}{\centering\setlength{\tabcolsep}{2pt}
\begin{tabular}{cccccc>{\centering}p{0.1mm}ccc}
\hline 
Attack & $\text{ADV}_{1}$ &  & $\text{ADV-EN}_{2}$ & $\text{ADP}_{2}$ & $\text{CCE-RM}_{2}$ &  & $\text{ADV-EN}_{3}$ & $\text{ADP}_{3}$ & $\text{CCE-RM}_{3}$\tabularnewline
\hline 
Non-att (Nat. acc.) & 83.9 &  & 85.3 & \textbf{85.3} & 84.5 &  & 86.1 & \textbf{86.2} & 84.9\tabularnewline
PGD $k=250$ & 41.4 &  & 42.8 & 44.2 & \textbf{45.8} &  & 43.8 & 45.1 & \textbf{48.6}\tabularnewline
BIM $k=250$ & 41.5 &  & 42.9 & 44.1 & \textbf{45.8} &  & 44.0 & 45.2 & \textbf{48.8}\tabularnewline
MIM $k=250$ & 41.9 &  & 43.3 & 44.8 & \textbf{46.3} &  & 44.5 & 45.7 & \textbf{49.1}\tabularnewline
B\&B (wPGD1) & 37.0 &  & 38.3 & 37.3 & \textbf{42.2} &  & 39.3 & 38.3 & \textbf{44.2}\tabularnewline
B\&B (wPGD2){*} & 4.9 &  & 2.9 & 3.9 & \textbf{6.0} &  & 4.2 & 4.3 & \textbf{7.1}\tabularnewline
SPSA & 50.0 &  & 53.5 & 52.8 & \textbf{56.2} &  & 53.8 & 53.9 & \textbf{56.6}\tabularnewline
Auto-Attack & 16.1 &  & 18.5 & 17.3 & \textbf{18.8} &  & 18.4 & 17.6 & \textbf{20.8}\tabularnewline
\hline 
\end{tabular}}
\end{table}

The effectiveness of adversarial training method depends on the diversity
(or the hardness) of the adversarial examples \citep{madry2017towards}.
Fort et al. (2019) found that differently initializing members' parameters,
even with the same training data, can end up with different local
optimal in the solution space. Therefore, the potential of ensemble
learning (in the remark ii) can be explained by the fact that the
adversarial space of an ensemble model $\mathcal{B}_{\text{insecure}}\left(\bx,\by,f^{en},\epsilon\right)$
is more diverse than that of a single model $\mathcal{B}_{\text{insecure}}\left(\bx,\by,f,\epsilon\right)$. 

\begin{table}
\caption{Robustness evaluation on the CIFAR10 dataset with standard CNN architecture.
We use $\epsilon=8/255,\eta=2/255$. Note that $mul\mathcal{A}$ represents
for multiple-targeted attack by adversary $\mathcal{A}$. \label{tab:result-c10}}

\centering{}\resizebox{0.45\textwidth}{!}{\centering\setlength{\tabcolsep}{2pt}
\begin{tabular}{cccccc>{\centering}p{0.1mm}ccc}
\hline 
Attack & $\text{ADV}_{1}$ &  & $\text{ADV-EN}_{2}$ & $\text{ADP}_{2}$ & $\text{CCE-RM}_{2}$ &  & $\text{ADV-EN}_{3}$ & $\text{ADP}_{3}$ & $\text{CCE-RM}_{3}$\tabularnewline
\hline 
Non-att (Nat. acc.) & 75.7 &  & 76.0 & 75.9 & \textbf{76.0} &  & \textbf{76.7} & 76.6 & 75.7\tabularnewline
PGD $k=100$ & 38.0 &  & 39.7 & 42.2 & \textbf{44.7} &  & 40.8 & 43.9 & \textbf{46.8}\tabularnewline
BIM $k=100$ & 38.2 &  & 39.7 & 42.2 & \textbf{44.9} &  & 40.8 & 43.8 & \textbf{46.8}\tabularnewline
MIM $k=100$ & 38.5 &  & 40.5 & 42.4 & \textbf{45.4} &  & 41.3 & 44.2 & \textbf{47.2}\tabularnewline
mul-PGD $k=20$ & 26.0 &  & 27.7 & 27.8 & \textbf{31.9} &  & 28.3 & 32.4 & \textbf{36.9}\tabularnewline
mul-BIM $k=20$ & 25.9 &  & 27.2 & 27.2 & \textbf{31.6} &  & 27.7 & 29.8 & \textbf{34.1}\tabularnewline
mul-MIM $k=20$ & 26.2 &  & 28.1 & 28.3 & \textbf{32.3} &  & 29.0 & 30.7 & \textbf{34.6}\tabularnewline
SPSA & 40.6 &  & 44.3 & 41.5 & \textbf{45.2} &  & 45.1 & 46.1 & \textbf{47.5}\tabularnewline
Auto-Attack & 25.1 &  & 25.0 & 24.4 & \textbf{29.9} &  & 25.5 & 28.1 & \textbf{31.9}\tabularnewline
\hline 
\end{tabular}}
\end{table}

\begin{table}
\caption{Robustness evaluation on the CIFAR100 dataset with standard CNN architecture.
We use $\epsilon=0.01,\eta=0.001$. Note that $mul\mathcal{A}$ represents
for multiple-targeted attack by adversary $\mathcal{A}$.\label{tab:result-c100}}

\centering{}\resizebox{0.45\textwidth}{!}{\centering\setlength{\tabcolsep}{2pt}
\begin{tabular}{cccccc>{\centering}p{0.1mm}ccc}
\hline 
Attack & $\text{ADV}_{1}$ &  & $\text{ADV-EN}_{2}$ & $\text{ADP}_{2}$ & $\text{CCE-RM}_{2}$ &  & $\text{ADV-EN}_{3}$ & $\text{ADP}_{3}$ & $\text{CCE-RM}_{3}$\tabularnewline
\hline 
Non-att (Nat. acc.) & 40.8 &  & 41.4 & 48.0 & \textbf{53.4} &  & 40.8 & 52.6 & \textbf{54.4}\tabularnewline
PGD $k=100$ & 26.8 &  & 29.7 & 30.9 & \textbf{35.3} &  & 32.8 & 36.2 & \textbf{39.5}\tabularnewline
BIM $k=100$ & 26.9 &  & 29.1 & 31.0 & \textbf{35.2} &  & 32.8 & 36.2 & \textbf{39.4}\tabularnewline
MIM $k=100$ & 27.0 &  & 29.0 & 30.8 & \textbf{35.3} &  & 32.9 & 36.1 & \textbf{39.6}\tabularnewline
mul-PGD $k=20$ & 16.4 &  & 15.8 & 20.1 & \textbf{24.2} &  & 16.6 & 24.8 & \textbf{28.4}\tabularnewline
mul-BIM $k=20$ & 15.9 &  & 15.5 & 19.4 & \textbf{23.7} &  & 16.3 & 24.5 & \textbf{28.1}\tabularnewline
mul-MIM $k=20$ & 16.7 &  & 16.1 & 20.3 & \textbf{24.1} &  & 16.8 & 25.1 & \textbf{28.6}\tabularnewline
SPSA & 25.6 &  & 25.5 & 24.1 & \textbf{31.8} &  & 26.0 & 32.5 & \textbf{35.0}\tabularnewline
Auto-Attack & 15.3 &  & 15.1 & 14.8 & \textbf{21.9} &  & 15.8 & 23.0 & \textbf{25.9}\tabularnewline
\hline 
\end{tabular}}
\end{table}

Our advantages over others (in the remark iii, iv) can be explained
by the fact that our proposed method encourages the diversity of its
committee members. Specifically it can be elaborated on with the following
three key points. Firstly, while other ensemble-based defenses use
the adversarial examples of the entire ensemble $\bx_{a}^{en}\sim\mathcal{B}_{\text{insecure}}\left(\bx,\by,f^{en},\epsilon\right)$,
our method makes use of the broader joint adversarial space $\bx_{a}^{i}\sim\mathcal{B}_{\text{insecure}}\left(\bx,\by,f^{i},\epsilon\right)$
(Lemma \ref{lemma:2} (i)). Secondly, each member has different loss
landscape \citep{fort2019deep}, in addition with the randomness of
an adversary (e.g., random starting points in PGD), each member has
its individual adversarial set (partly collapsed as shown in the next
experiment). Therefore, similar with the bagging technique, by promoting
each member with its adversarial examples independently, we can increase
the diversity of the joint adversarial space. Last but not least,
inspired from traditional ensemble learning \citep{liu1999ensemble},
by elegantly collaborating PO and DO, we encourage the negative correlation
among ensemble members, therefore, further improve the diversity of
the joint adversarial space.  

\subsection{Transferability among Ensemble Members }

The transferability is a phenomenon when adversarial examples generated
to attack a specific model also mislead other models trained for the
same task. In the ensemble learning context, adversarial examples
which are transferred well among members will likely fool the entire
ensemble. Therefore, reducing the transferability among members is
a principled approach to achieve better robustness as claimed in the
previous works \citep{pang2019improving,kariyappa2019improving}.
In this sub-section, we provide a further understanding of the transferability
to the overall robustness and show the impact of the transferring
flow. 

We first summarize the experiments setting. The experiments are conducted
on the CIFAR10 dataset with an ensemble of two members under PGD attack
with $k=20,\epsilon=8/255,\eta=2/255$. The results are reported in
Table \ref{tab:transfer}. CCE-Base is our model which disables the
crossing PO and DO by setting $\lambda_{pm}=\lambda_{dm}=0$. $a^{(i,j)}$
represents for the robust accuracy when adversarial examples $\{x_{a}^{i}\}$
attack model $f^{j}$. $\left|S\right|$ shows the cardinality of
a subset $S$, i.e., the percentage of the images that go into the
subset $S$, which can be one of $\{S_{11},S_{01},S_{10},S_{00}\}$.
From the definition of the transferability as mentioned above, to
measure the transferability of adversarial examples $\{\bx_{a}^{i}\}$,
we can compute the accuracy difference of model $f^{i}$ and $f^{j},j\neq i$
against the same attack $\{\bx_{a}^{i}\}$. The smaller gap implies
that adversarial examples $\{\bx_{a}^{i}\}$ are more transferable.
The overall transferability of an ensemble method can be evaluated
by the sum the accuracy differences over all its members, i.e., $T=a^{(1,2)}-a^{(1,1)}+a^{(2,1)}-a^{(2,2)}$. 

We would like to emphasize some following important empirical observations
(Table \ref{tab:transfer}): 

1) \textbf{The impact of the transferring flow. }It can be observed
that the cardinality $\left|S_{11}\right|$ in CCE-RM $(39.9\%)$
is larger than that in CCE-Base ($36.1\%)$, while the cardinality
$\left|S_{01}\right|,\left|S_{10}\right|,\left|S_{00}\right|$ is
smaller than those in CCE-Base which serves as evidence that the adversarial
examples are successfully transferred from subsets $S_{10},S_{01},S_{00}$
to subset $S_{11}$ as we expect. This helps improve the overall robustness
of the ensemble model from $43.3\%$ for CCE-Base to $45.5\%$ for
CCE-RM.

2) \textbf{The transferable space is just a subset of the adversarial
space.} By definition, the subset $S_{00}$ consists of adversarial
examples which fools both models $f^{1},f^{2}$, therefore, $S_{00}$
represents for the transferable space of the ensemble model $f^{en}$.
In fact, the cardinality of $\left|S_{00}\right|$ is smaller than
the insecure region of the ensemble model $f^{en}$ (i.e., the total
classification error $100\%-a^{(en,en)}$) in all methods showing
that the transferable space cannot represent for the insecure region
of the ensemble model $f^{en}$, and the former is just the subset
of the latter.

3) \textbf{Reducing transferability among ensemble members is not
enough to improve adversarial robustness. }In fact, the transferability
metric $T$ for CCE-RM is $33.7\%$ which is much smaller than those
for ADP and ADV-EN ($59.3\%$ and $65.5\%$, respectively). The smaller
value of $T$ shows that the adversarial examples $\{\bx_{a}^{1}\},\{\bx_{a}^{2}\}$
in our method are more transferable than those in ADV-EN and ADP.
However, the fact that the overall robustness of our method is significantly
better evidently shows that \emph{transferability is not the only
factor for improving the robustness. }This is because the robustness
of each individual member under a direct attack (i.e., $a^{(1,1)}$
or $a^{(2,2)}$) is much lower than our method. In addition, the cardinality
$\left|S_{11}\right|$ in our method is $39.9\%$ which is much bigger
than those in ADV-EN ($24.0\%$) and ADP ($25.7\%$).

We provide two additional metrics which are (i) $nT=100\%-a^{(en,en)}-\left|S_{00}\right|$
to measure the cardinality of \emph{adversarial examples set which
successful attack model $f^{en}$ but non transferable among $f^{1},f^{2}$}
and (ii) $a_{single}=a^{(en,en)}-\left|S_{11}\right|$ to measure
the cardinality of \emph{adversarial examples set which are correctly
predicted by only one model either $f^{1}$ or $f^{2}$ but still
being correctly predicted by model $f^{en}$}. The comparison on the
metric $nT$ in Table \ref{tab:transfer} shows that most of successful
adversarial examples in our method are predicted incorrectly by both
members. While the comparison on the metric $a_{single}$ shows that
most of unsuccessful adversarial examples in our method are predicted
correctly by both members. The two comparisons demonstrate that our
method have better robustness than other methods because (i) the adversarial
examples have to fool both ensemble members for a successful attack
and (ii) our ensemble model can predict correctly by both members
which explains the higher performance.

The remarks (2, 3) further imply that:

\emph{An ensemble model cannot be secure against white-box attacks
unless its members are robust against direct attacks (even they are
secure against transferred attacks)}. 

This hypothesis provides more understanding of the correlation between
the transferability and the overall robustness of an ensemble model. 

\begin{table}
\begin{centering}
\caption{Evaluation on the transferability among ensemble members on the CIFAR10
dataset. $\{T,nT,a_{single}\}$ are the metrics of interest. \label{tab:transfer}}
\par\end{centering}
\centering{}\resizebox{0.45\textwidth}{!}{\centering\setlength{\tabcolsep}{2pt}
\begin{tabular}{cccccccc|ccc}
\hline 
Model & $a^{(en,en)}$ & $a^{(1,1)}$ & $a^{(2,2)}$ & $\left|S_{11}\right|$ & $\left|S_{01}\right|$ & $\left|S_{10}\right|$ & $\left|S_{00}\right|$ & $T$ & $nT$ & $a_{single}$\tabularnewline
\hline 
ADV-EN & 40.7 & 31.1 & 33.2 & 24.0 & 17.0 & 13.0 & 46.0 & 65.5 & 13.3 & 16.7\tabularnewline
ADP & 42.9 & 31.0 & 33.1 & 25.7 & 13.1 & 11.7 & 49.5 & 59.3 & 7.6 & 17.2\tabularnewline
CCE-RM & 45.5 & 41.7 & 41.4 & 39.9 & 5.2 & 5.5 & 49.5 & 33.7 & 5.0 & 5.6\tabularnewline
CCE-Base & 43.3 & 40.3 & 40.5 & 36.1 & 6.5 & 7.2 & 50.3 & 36.1 & 6.4 & 7.2\tabularnewline
\hline 
\end{tabular}}
\end{table}

\subsection{Improving Natural Accuracy and Adversarial Detectability }

The parameter $\lambda_{pm}(\lambda_{dm})$ controls the level of
the agreement (disagreement) of models $\{f^{i}\},i\in[1,N]$ and
model $f^{j},j\neq i$ on the same adversarial example $x_{a}^{j}$.
By disabling the crossing PO ($\lambda_{pm}=0$) and strengthening
DO (i.e., $\lambda_{dm}=5)$, our method encourages the disagreement
among members on the same data example, therefore, increases the negative
correlation among them. This setting of CCE-DM leads to two important
properties, which are empirically proved by the experiments below. 

\paragraph{Improving Natural Accuracy. }

We compare natural accuracies of two variants: CCE-RM and CCE-DM against
the baselines. Table \ref{tab:clean_acc} shows that CCE-DM significantly
improves natural accuracy of the ensemble model by a large margin.
In traditional ensemble learning, the key ingredient to improve natural
performance is making ensemble members more diverse \citep{kuncheva2003measures}.
By disabling the crossing PO and strengthening DO, CCE-DM variant
enforces the diversity more strictly, which explains the improvement
of the natural performance. This result demonstrates the promising
usage of adversarial examples to improve the traditional ensemble
learning. 

\begin{table}[h]
\caption{Comparison of the natural performance on the CIFAR10 dataset (the
subscript number denotes the number members).\label{tab:clean_acc}}

\centering{}\resizebox{0.30\textwidth}{!}{\centering\setlength{\tabcolsep}{2pt}
\begin{tabular}{ccccc}
\hline 
Model & $\text{ADV-EN}$ & $\text{ADP}$ & $\text{CCE-RM}$ & $\text{CCE-DM}$\tabularnewline
\hline 
$\text{CNN}_{2}$ & 76.0 & 75.9 & 76.0 & 86.0\tabularnewline
$\text{CNN}_{3}$ & 76.7 & 76.6 & 75.7 & 87.2\tabularnewline
$\text{ResNet}_{2}$ & 85.3 & 85.3 & 84.5 & 91.0\tabularnewline
$\text{ResNet}_{3}$ & 86.1 & 86.2 & 84.9 & 91.6\tabularnewline
\hline 
\end{tabular}}
\end{table}

\paragraph{Adversarial Detectability. }

CCE-DM can distinguish between benign and adversarial examples more
easily. It is because the committee members produce a uniform prediction
for adversarial examples, while yielding a very high confident prediction
for benign examples. For example, as shown in Figure \ref{fig:pred-sp},
the committee members are highly  certain when predicting benign
examples, while they provide highly uncertain predictions with high
entropy for adversarial examples. The histogram for all images in
the test set and their adversarial examples in Figure \ref{fig:pred-hist}
demonstrate the consistency of this observation over the data distribution. 

\begin{figure}
\begin{centering}
\includegraphics[width=0.7\columnwidth]{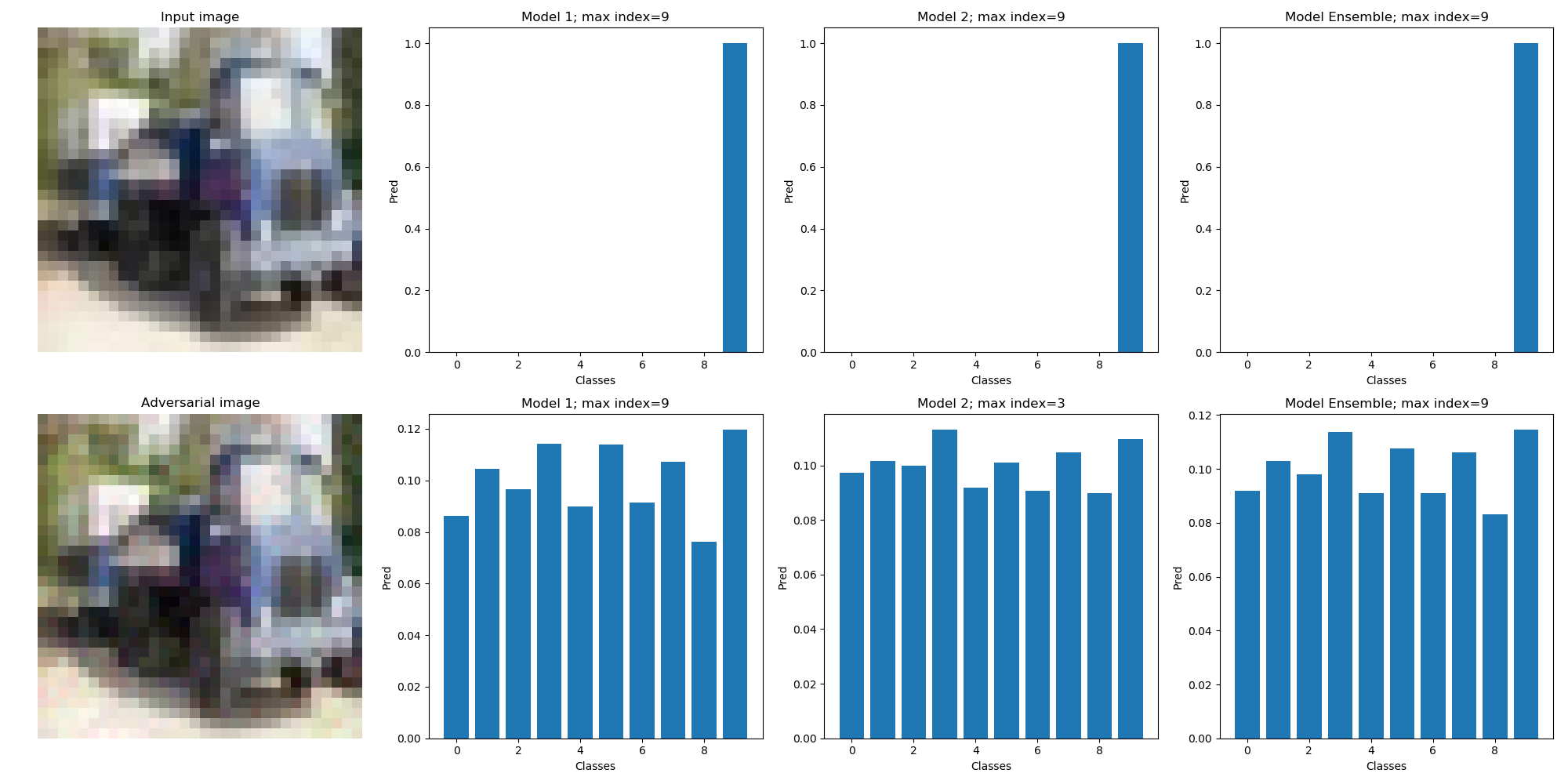}\vspace{-1.5mm}
\par\end{centering}
\caption{Prediction example in the detection mode. Top/bottom images are benign/adversarial
images. Next columns are outputs from $f^{1},f^{2},f^{en}$ \label{fig:pred-sp}}
\end{figure}

\begin{figure}
\begin{centering}
\includegraphics[width=0.6\columnwidth]{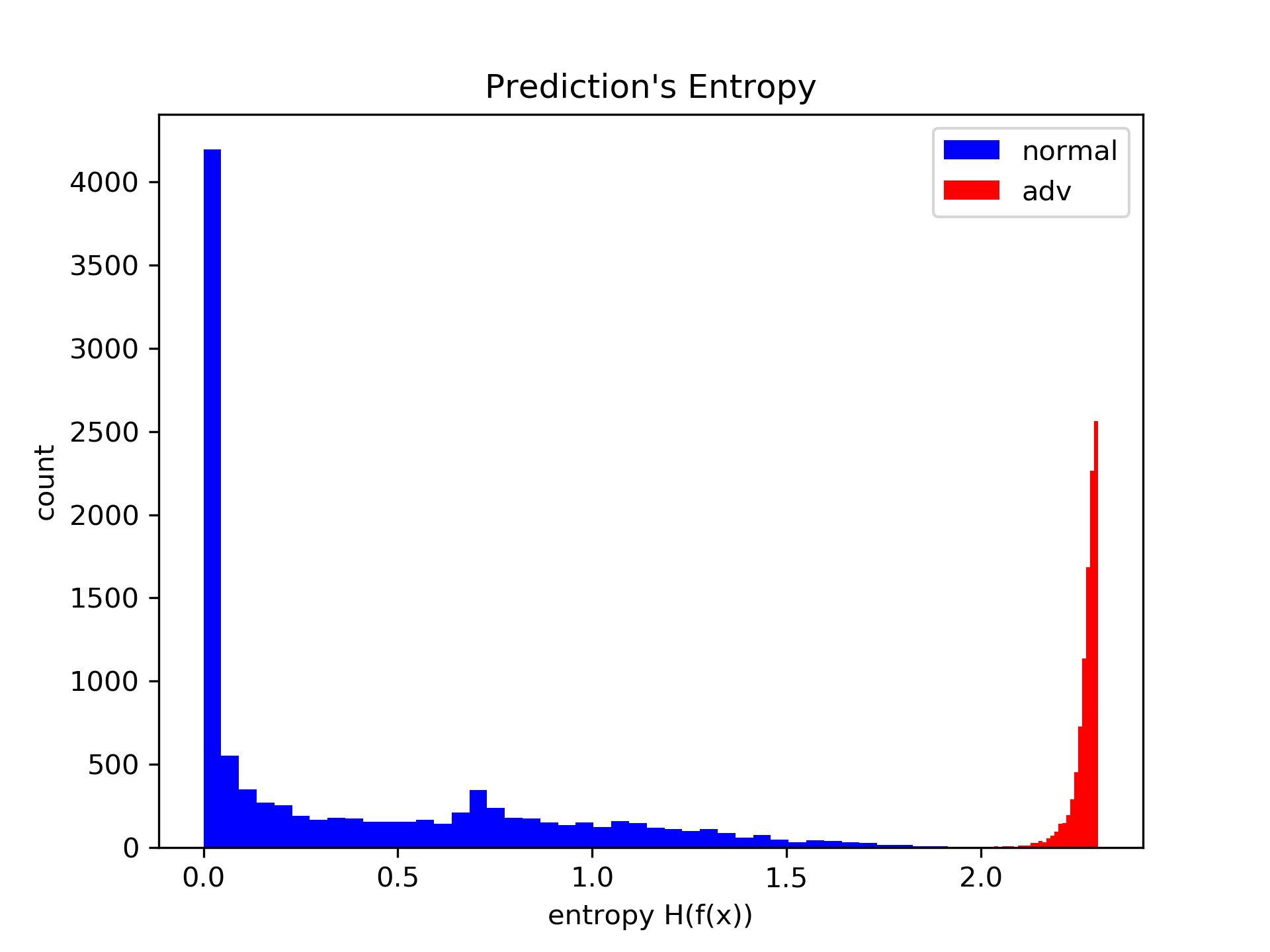}\vspace{-1.5mm}
\par\end{centering}
\caption{Histogram of prediction entropy in the detection mode\label{fig:pred-hist}}
\end{figure}

These results further inspire us to develop a simple yet effective
method to detect adversarial examples based on the entropy of the
model prediction. Following the evaluation in \citep{pang2018towards,pang2019improving},
we try with different thresholds to distinguish the benign and adversarial
examples and report the AUC score of each adversarial attack. It is
worth noting that, we do not intend to compete with other adversarial
detectors but just to show the advantage and flexibility of our CCE.
The experiment is on the CIFAR10 dataset with an ensemble of two members.
We conduct two evaluations to justify our understanding. First, we
study our detection method against three different attacks: PGD, BIM
and MIM with the same hyper-parameter setting $k=20,\epsilon=8/255,\eta=1/255$.
The result in Figure \ref{fig:det-atks} shows that our method can
accurately and consistently detect all three kind of attacks. Secondly,
we study our detection method on different attack strengths. We use
the PGD attack $k=20,\eta=1/255$ and vary the distortion bound $\epsilon$
from $1/255$ to $24/255$. The result in Figure \ref{fig:det-eps}
shows that our method can perform well on a wide range of attack strengths.
The adversary is obviously less distinguishable when decreasing its
strength. However, our method still obtains a very high AUC score
$(93.4/100)$ even under a very weak attack $(\epsilon=1/255)$, in
which adversarial images look nearly identical to the original ones. 

\begin{figure}
\centering{}\includegraphics[width=0.6\columnwidth]{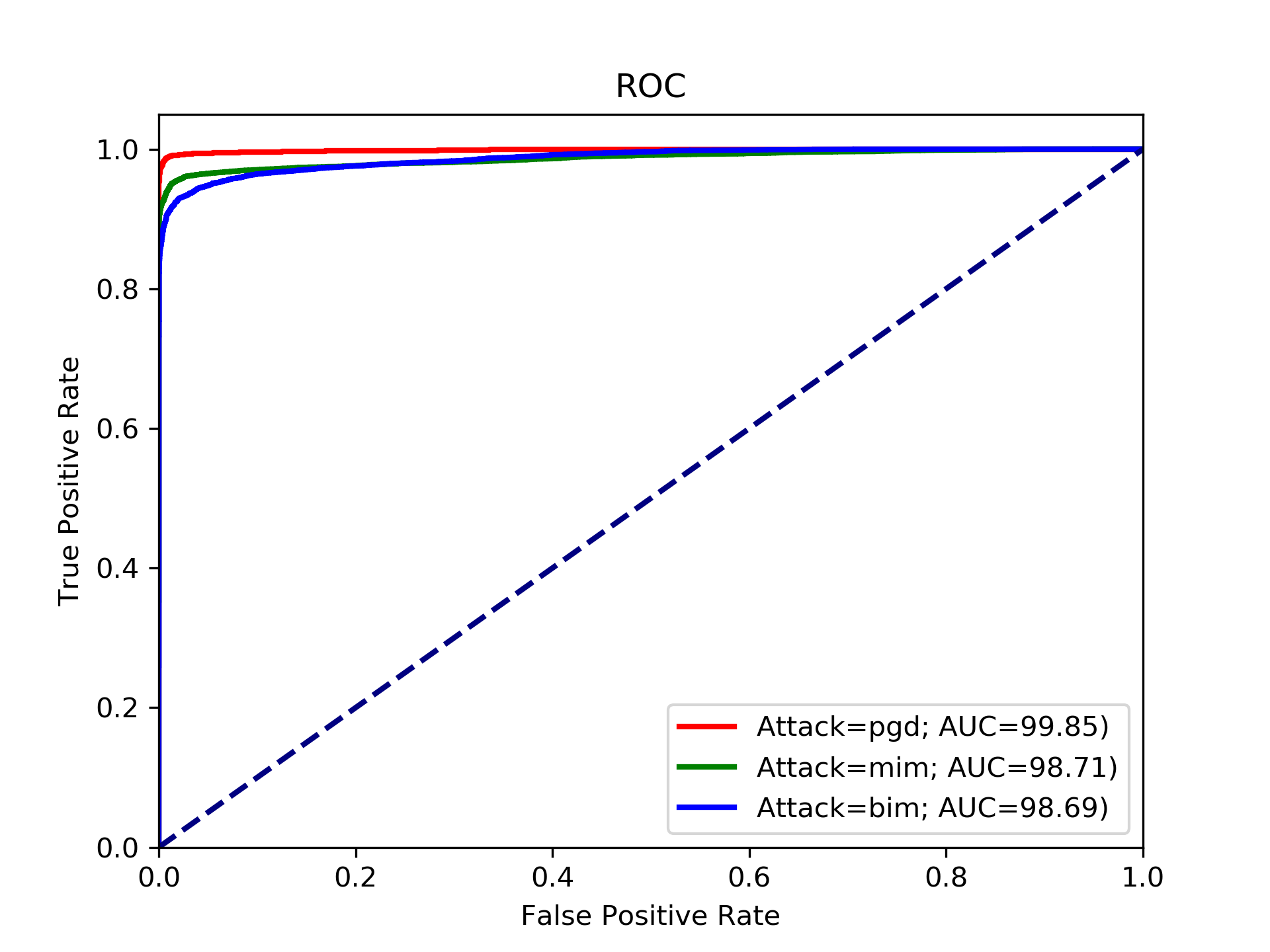}\caption{ROC of CCE-RM under multiple types of attack\label{fig:det-atks}}
\end{figure}

\begin{figure}
\centering{}\includegraphics[width=0.6\columnwidth]{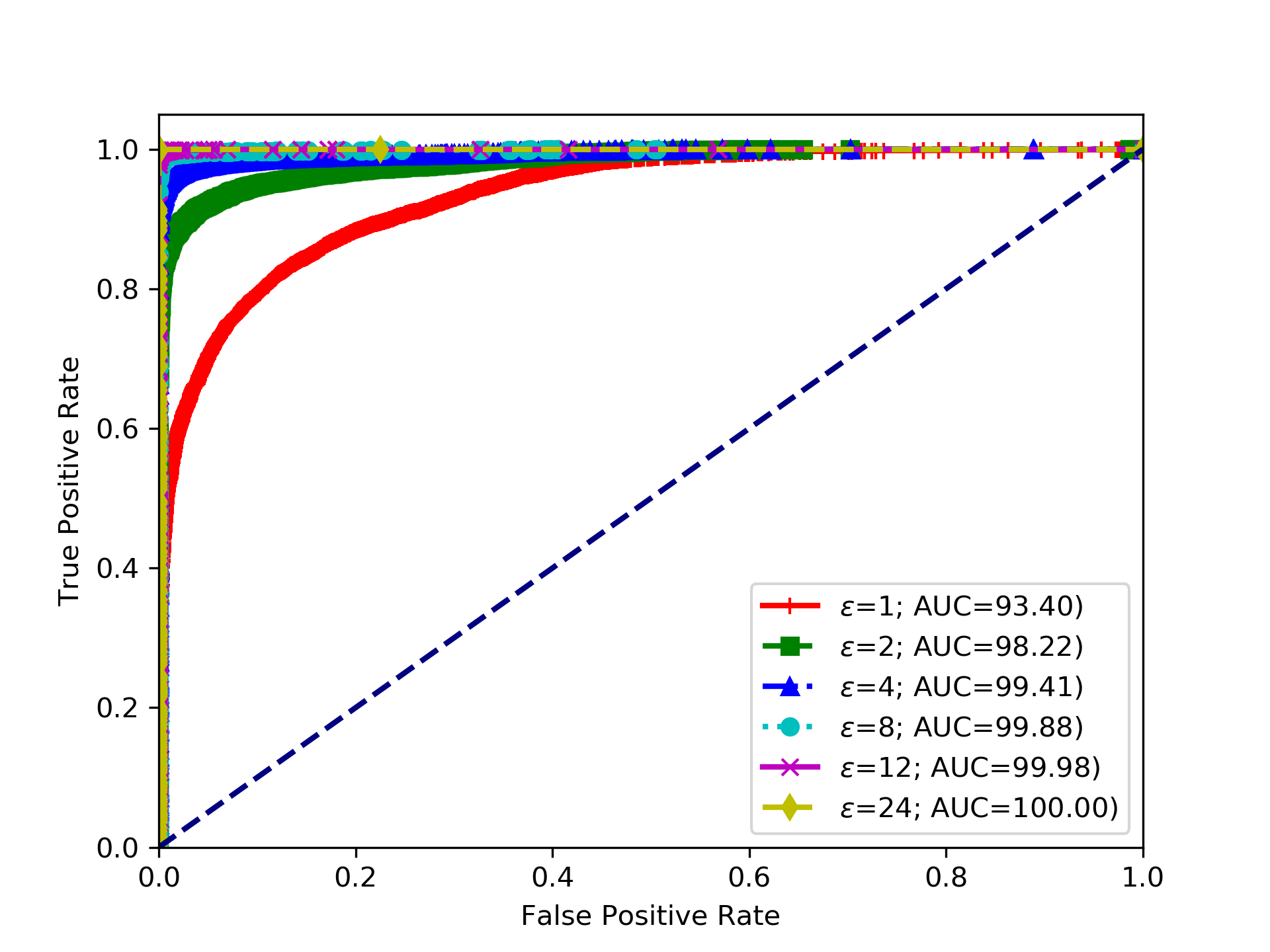}\caption{ROC of CCE-RM under multiple attack strengths\label{fig:det-eps}}
\end{figure}

\section{Conclusion}

In this paper, we explore the use of ensemble-based learning to improve
adversarial robustness. In particular, we propose a cross-collaborative
strategy by means of enforcing the transferring flow of adversarial
examples, thereby implicitly increasing the diversity of adversarial
space and improving the robustness of the ensemble. Moreover, our
proposed method can be performed in both detection and robustness
modes. We conduct extensive and comprehensive experiments to show
the improvement of our proposed method on state-of-the-art baselines.
We also provide the detailed understanding of the relationship between
the transferability and the overall robustness in the ensemble learning
context.

\paragraph{Acknowledgement.}

This work was partially supported by the Australian Defence Science
and Technology (DST) Group under the Next Generation Technology Fund
(NTGF) scheme.

\printbibliography

\pagebreak{}

\begin{center}
{\large{}Supplementary materials for ``Improving Ensemble Robustness
by Collaboratively Promoting and Demoting Adversarial Robustness''}{\large\par}
\par\end{center}

\section{Proof }
\begin{lem}
\label{lem:2}Let us define $f^{ens}\left(\cdot\right)=\frac{1}{2}f^{1}\left(\cdot\right)+\frac{1}{2}f^{2}\left(\cdot\right)$
for two given models $f^{1}$ and $f^{2}$. If $f^{1}$ and $f^{2}$
predict an example $\bx$ accurately, we have the following:

i) $\mathcal{B}_{\text{insecure}}\left(\bx,\by,f^{ens},\epsilon\right)\subset\mathcal{B}_{\text{insecure}}\left(\bx,\by,f^{1},\epsilon\right)\cup\mathcal{B}_{\text{insecure}}\left(\bx,\by,f^{2},\epsilon\right).$

ii) $\mathcal{B}_{\text{secure}}\left(\bx,\by,f^{1},\epsilon\right)\cap\mathcal{B}_{\text{secure}}\left(\bx,\by,f^{2},\epsilon\right)\subset\mathcal{B}_{\text{secure}}\left(\bx,\by,f^{ens},\epsilon\right).$
\end{lem}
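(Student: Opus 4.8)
The plan is to prove both inclusions by a direct pointwise argument: take an arbitrary $\bx'$ in the left-hand set and show it lies in the right-hand set, exploiting only the definition of $f^{ens}$ as the arithmetic mean $\tfrac12 f^1 + \tfrac12 f^2$ together with the monotone behaviour of $\text{argmax}$ under averaging. Observe first that (i) and (ii) are logically equivalent by taking complements inside the ball $\mathcal{B}(\bx,\epsilon)$: statement (ii) says that if $\bx'$ is in both secure sets then it is in the secure set of $f^{ens}$, and the contrapositive of that implication is precisely statement (i) (if $\bx'$ is insecure for $f^{ens}$, it must be insecure for $f^1$ or for $f^2$). So it suffices to prove (ii).

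For (ii), suppose $\bx' \in \mathcal{B}_{\text{secure}}(\bx,\by,f^1,\epsilon) \cap \mathcal{B}_{\text{secure}}(\bx,\by,f^2,\epsilon)$. Then $\text{argmax}_i f^1_i(\bx') = \by$ and $\text{argmax}_i f^2_i(\bx') = \by$, which means $f^1_{\by}(\bx') \ge f^1_k(\bx')$ for all classes $k$, and likewise $f^2_{\by}(\bx') \ge f^2_k(\bx')$ for all $k$. Averaging these two families of inequalities with weights $\tfrac12,\tfrac12$ gives
\[
f^{ens}_{\by}(\bx') = \tfrac12 f^1_{\by}(\bx') + \tfrac12 f^2_{\by}(\bx') \ge \tfrac12 f^1_k(\bx') + \tfrac12 f^2_k(\bx') = f^{ens}_k(\bx')
\]
for every class $k$, hence $\text{argmax}_i f^{ens}_i(\bx') = \by$ and $\bx' \in \mathcal{B}_{\text{secure}}(\bx,\by,f^{ens},\epsilon)$. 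Since $\bx'$ is in $\mathcal{B}(\bx,\epsilon)$ throughout, this establishes (ii), and then (i) follows by the complement argument above.

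The only genuine subtlety — and the place I would be slightly careful — is the handling of ties in the $\text{argmax}$: the averaged inequality is non-strict, so if several classes attain the maximum for $f^{ens}$ one must check the tie-breaking convention used in the definition of the secure set is consistent across $f^1$, $f^2$, and $f^{ens}$ (e.g.\ smallest index wins). Under the natural reading where "$\text{argmax}_i f_i(\bx') = \by$" means $\by$ is \emph{a} maximiser, no issue arises at all and the argument above is complete; under a strict-uniqueness reading one notes that $\by$ being the unique maximiser for both $f^1$ and $f^2$ forces at least one of the averaged inequalities to be strict for each $k \neq \by$, again yielding $\by$ as the unique maximiser for $f^{ens}$. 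Either way the hypothesis that $f^1,f^2$ predict $\bx$ accurately is not actually needed for the set inclusions themselves; it is the contextual assumption under which these sets are the objects of interest.
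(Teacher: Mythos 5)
Your proof is correct and follows essentially the same route as the paper's: reduce (i) to (ii) by complementation, then average the componentwise inequalities $f_{\by}^{1}(\bx')\geq f_{j}^{1}(\bx')$ and $f_{\by}^{2}(\bx')\geq f_{j}^{2}(\bx')$ to conclude $f_{\by}^{ens}(\bx')\geq f_{j}^{ens}(\bx')$. Your added remark on argmax tie-breaking is a minor refinement the paper does not address, but it does not change the argument.
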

\begin{proof}
It is obvious that Lemma \ref{lem:2} (i) and (ii) are equivalent.
We hence need to prove only Lemma \ref{lem:2} (ii). Consider a classification
problem on a dataset $\mathcal{D}$ with $M$ classes, the true label
of $\bx$ is $\by\in\{1,2,...,M\}$ and let $\bx'\in\mathcal{B}_{\text{secure}}\left(\bx,\by,f^{1},\epsilon\right)\cap\mathcal{B}_{\text{secure}}\left(\bx,\by,f^{2},\epsilon\right)$.
Since $f^{1}$ and $f^{2}$ predict $\bx'$ correctly with the label
$\by$, we then have:
\begin{align*}
f_{y}^{1}\left(\bx'\right) & \geq f_{j}^{1}\left(\bx'\right),\forall j\in\{1,2,...,M\},\\
f_{y}^{2}\left(\bx'\right) & \geq f_{j}^{2}\left(\bx'\right),\forall j\in\{1,2,...,M\}.
\end{align*}

This follows that
\[
f_{y}^{ens}\left(\bx'\right)\geq f_{j}^{ens}\left(\bx'\right),\forall j\in\{1,2,...,M\},
\]
which means
\[
\bx'\in\mathcal{B}_{\text{secure}}\left(\bx,\by,f^{ens},\epsilon\right).
\]
\end{proof}

\section{Related works }

In this section we introduce the most related works to our approach
including adversarial training and ensemble-based methods. 

\subsection{Adversarial Training. }

Adversarial training (ADV) can be traced back to \citep{goodfellow2014explaining},
in which a model becomes more robust by incorporating its adversarial
examples into training data. Given a model $f$, a benign example
pair $(\bx,\by)$ and an adversarial example $\bx_{a}$, the objective
function of ADV as: 
\[
\mathcal{L}_{AT}(\bx,\bx_{a},\by)=\mathcal{L}(f(\bx),\by)+\mathcal{L}(f(\bx_{a}),y)
\]

Although many defense models were broken by \citep{athalye2018obfuscated}
or gave a false sense of robustness because of the obfuscated gradient,
the adversarial training \citep{madry2017towards} was among the few
that were resilient against attacks. Many ADV's variants have been
developed including but not limited to: (1) difference in the choice
of adversarial examples, e.g., the worst-case examples \citep{goodfellow2014explaining}
or most divergent examples \citep{Zhang2019theoretically}, (2) difference
in the searching of adversarial examples, e.g., non-iterative FGSM,
Rand FGSM with random initial point or PGD with multiple iterative
gradient descent steps \citep{madry2017towards,shafahi2019adversarial}\noun{,}
(3) difference in additional regularizations, e.g., adding constraints
in the latent space \citep{zhang2019defense,bui2020improving}, (4)
difference in model architecture, e.g., activation function \citep{xie2020smooth}
or ensemble models \citep{pang2019improving}.

\subsection{Ensemble-based Defenses.}

Recent works \citep{tramer2017ensemble,kariyappa2019improving} shows
that ensemble adversarial trained models can reduce the dimensionality
of adversarial subspace \citep{tramer2017space}. There are different
approaches, however, the key ingredient of their stories is reducing
the transferability of adversarial examples between members. In \citep{tramer2017ensemble},
the authors used the crafted perturbations from static pretrained
models as augmented data to decouple the generation process of adversarial
examples of target model. However, as reported in \citep{tramer2017ensemble},
this method was designed for black-box attacks, thus still vulnerable
to white-box attacks. In \citep{kariyappa2019improving}, robustness
was achieved by aligning the gradient of committee members to be diametrically
opposed, hence reducing the shared adversarial spaces, or the transferability.
However, attempting to achieve gradient alignment is unreliable for
high-dimensional datasets and it is difficult to extend for ensemble
with more than two committee members. More recently, \citep{pang2019improving}
proposed to promote the diversity of non-maximal predictions of the
committee members (i.e., the diversity among softmax probabilities
except the highest ones) to reduce the adversarial transferability
among them. The adaptive diversity promoting (ADP) regularizer as:
$ADP_{\alpha,\beta}(\bx,\by)=\alpha\;\mathcal{H}(\mathcal{F})+\beta\;log(\mathbb{ED})$,
where $\mathcal{H}(\mathcal{F})$ is the Shannon entropy of the ensemble
prediction and $log(\mathbb{ED})$ is the logarithm of the ensemble
diversity. As reported in their paper, ADP can cooperate with adversarial
training to increase the robustness. In this case, the objective function
of ADV as: 
\begin{multline*}
\mathcal{L}_{ADP}(\bx,\bx_{a},\by)=\mathcal{L}_{AT}(\bx,\bx_{a},\by)\\
-ADP_{\alpha,\beta}(\bx,\by)-ADP_{\alpha,\beta}(\bx_{a},\by)
\end{multline*}

\section{Model architecture and training setting}

We use both standard CNN architecture and ResNet architecture in our
experiment. For ResNet architecture, we use the same architecture
and training setting as in \citep{pang2019improving}. More specifically,
we use ResNet-20 and Adam optimizer, with initialized learning rate
0.001 and reduce it by a factor 0.1 at epoch 80, 120, and 160. Table
\ref{tab:archit} summarizes the standard CNN architecture for each
ensemble member in our experiments. The architectures for the MNIST
and CIFAR10 datasets are identical with those in \citep{carlini2017towards}.
We use Adam optimization with learning rate 0.001 for all datasets.
Conv(k) represents for the Convolutional layer with k output filters
and ReLU activation. Kernel size 3 and stride 1 for every convolution
layer. FC(k) represents for the Fully Connected layers with k output
filters without ReLU activation. Dropout rate is 0.5. We train models
in 180 epochs for both CIFAR10 and CIFAR100 datasets and in 100 epochs
for the MNIST dataset. 

\begin{table}
\caption{Model architectures for experimental section\label{tab:archit}}

\centering{}\resizebox{0.4\textwidth}{!}{\centering\setlength{\tabcolsep}{2pt}
\begin{tabular}{ccc}
\hline 
\textbf{MNIST} & \textbf{CIFAR10} & \textbf{CIFAR100}\tabularnewline
\hline 
2 x Conv(32) & 2 x Conv(64) & 3 x Conv(64)\tabularnewline
MaxPool & MaxPool & MaxPool\tabularnewline
2 x Conv(64) & 2 x Conv(128) & 3 x Conv(128)\tabularnewline
MaxPool & MaxPool & MaxPool\tabularnewline
FC(200), ReLU & FC(256), ReLU & FC(256), ReLU\tabularnewline
Dropout(0.5) & Dropout(0.5) & Dropout(0.5)\tabularnewline
FC(200), ReLU & FC(256), ReLU & 2 x (FC(256), ReLU)\tabularnewline
FC(10) & FC(10) & FC(100)\tabularnewline
Softmax & Softmax & Softmax\tabularnewline
\hline 
\end{tabular}}
\end{table}

\paragraph{Comparison on the training time. }

Our method requires to find the adversarial examples of each member
and do cross inference, therefore, it takes a longer training process.
We measured the training time (per epoch) on our machine with Nvidia
RTX Titan GPU, using ResNet architecture (N=2,3) with batch size 64
on the CIFAR10 dataset and summarize as in Table \ref{tab:train-time}. 

\begin{table}
\caption{Comparison on the training time on the CIFAR10 dataset using ResNet
architecture\label{tab:train-time}}

\centering{}%
\begin{tabular}{ccc}
\hline 
Model & N=2 & N=3\tabularnewline
\hline 
ADV (N=1) & 109s & 109s\tabularnewline
ADV-EN & 205s & 319s\tabularnewline
ADP & 210s & 328s\tabularnewline
Ours & 356s & 546s\tabularnewline
\hline 
\end{tabular}
\end{table}

\section{White-box attacks evaluation }

In addition to the result in the experimental section, we provide
further results on the evaluation of adversarial robustness under
white-box attacks. Firstly, we explain in detail the metrics of interest
in our experiments. Secondly, we provide an ablation study to show
the impact of the transferring flow to the improvement. 

\subsection{Robustness evaluation metrics }

\subsubsection{Static attack and Adaptive attack. }

There are two scenarios of attacks on an ensemble model \citep{he2017adversarial}.
The first scenario is \emph{static attack}, in which the attacker
is not aware of the ensemble method (i.e., how to do the ensemble
for making the final prediction). The other scenario is \emph{adaptive attack},
where the attacker has full access to the ensemble method and adapts
attacks accordingly. In our experiments, we make use of the adaptive
attack, which is a considerably stronger attack. 

\subsubsection{Non-targeted attack and Multiple-targeted attack.}

We use both non-targeted attack ($\mathcal{A}$) and multiple-targeted
attack ($mul\mathcal{A}$) in our evaluation. The non-targeted attack
obtains adversarial examples by maximizing the loss w.r.t its true
label, resulting in any non-true label prediction. The multiple-targeted
attack is undertaken by performing simultaneously targeted attacks
for all possible data labels (10 for CIFAR10 and 100 for CIFAR100)
and being counted if any individual targeted-attack is successful.
While the non-targeted attack considers only one direction of the
gradient, the multiple-targeted attack takes many directions into
account, therefore, being considered as a much stronger attack.

\subsection{Ablation study }

We provide an ablation study to compare CCE-RM with CCE-Base (which
disables promoting and demoting operations by setting $\lambda_{pm}=\lambda_{dm}=0$).
Firstly, the comparison in Table \ref{tab:ablation-base} shows that
even CCE-Base variant can beat ADP method on both CIFAR10 and CIFAR100
datasets. This surpassness can be explained from the fact our proposed
method encourages the diversity of its committee members. More specifically,
each member is reinforced with two data sources: clean data $\{\bx\}$
and adversarial examples $\{\bx_{a}^{n}\}$, which becomes more diverge
due to the gradually more divergence of the committee models and the
random initialization of PGD at the step $0$. From this point of
view, our method can be linked to the bagging technique in traditional
ensemble learning, which is a well-known method to produce the diversity
in the ensemble. Secondly, CCE-RM shows a huge improvement over CCE-Base
in both CIFAR10 and CIFAR100 datasets. This result demonstrates the
impact of the transferring flow, which offers better collaboration
among members. 

\begin{table}
\caption{Ablation study on the impact of the transferring flow. Note that $mul\mathcal{A}$
represents for the multiple-targeted attack by adversary $\mathcal{A}$.\label{tab:ablation-base}}

\begin{centering}
\subfloat[Evaluation on CIFAR10 dataset. We commonly use $\epsilon=8/255,\eta=2/255$]{
\begin{centering}
\resizebox{0.4\textwidth}{!}{\centering\setlength{\tabcolsep}{2pt}
\begin{tabular}{cccc>{\centering}p{0.1in}ccc}
\hline 
 & $\text{ADP}_{2}$ & $\text{CCE-Base}_{2}$ & $\text{CCE-RM}_{2}$ &  & $\text{ADP}_{3}$ & $\text{CCE-Base}_{3}$ & $\text{CCE-RM}_{3}$\tabularnewline
\hline 
Non-att (Nat. acc.) & 75.9 & 75.8 & \textbf{76.0} &  & 76.6 & 76.4 & 75.7\tabularnewline
PGD $k=100$ & 42.2 & 43.4 & \textbf{44.7} &  & 43.9 & 44.5 & \textbf{46.8}\tabularnewline
BIM $k=100$ & 42.2 & 43.4 & \textbf{44.9} &  & 43.8 & 44.5 & \textbf{46.8}\tabularnewline
MIM $k=100$ & 42.4 & 44.1 & \textbf{45.4} &  & 44.2 & 45.0 & \textbf{47.2}\tabularnewline
mul-PGD $k=20$ & 27.8 & 31.1 & \textbf{31.9} &  & 32.4 & 32.4 & \textbf{36.9}\tabularnewline
mul-BIM $k=20$ & 27.2 & 30.8 & \textbf{31.6} &  & 29.8 & 32.2 & \textbf{34.1}\tabularnewline
mul-MIM $k=20$ & 28.3 & 31.5 & \textbf{32.3} &  & 30.7 & 32.9 & \textbf{34.6}\tabularnewline
SPSA & 41.5 & 44.3 & \textbf{45.2} &  & 46.1 & 47.2 & \textbf{47.5}\tabularnewline
Auto-Attack & 24.4 & 29.2 & \textbf{29.9} &  & 28.1 & 31.5 & \textbf{31.9}\tabularnewline
\hline 
\end{tabular}}
\par\end{centering}
}
\par\end{centering}
\centering{}\subfloat[Evaluation on CIFAR100 dataset. We commonly use $\epsilon=0.01,\eta=0.001$]{
\begin{centering}
\resizebox{0.4\textwidth}{!}{\centering\setlength{\tabcolsep}{2pt}
\begin{tabular}{cccc>{\centering}p{0.1in}ccc}
\hline 
 & $\text{ADP}_{2}$ & $\text{CCE-Base}_{2}$ & $\text{CCE-RM}_{2}$ &  & $\text{ADP}_{3}$ & $\text{CCE-Base}_{3}$ & $\text{CCE-RM}_{3}$\tabularnewline
\hline 
Non-att (Nat. acc.) & 48.0 & 51.1 & \textbf{53.4} &  & 52.6 & 54.2 & \textbf{54.4}\tabularnewline
PGD $k=100$ & 30.9 & 33.6 & \textbf{35.3} &  & 36.2 & 37.0 & \textbf{39.5}\tabularnewline
BIM $k=100$ & 31.0 & 33.7 & \textbf{35.2} &  & 36.2 & 37.1 & \textbf{39.4}\tabularnewline
MIM $k=100$ & 30.8 & 33.5 & \textbf{35.3} &  & 36.1 & 37.2 & \textbf{39.6}\tabularnewline
mul-PGD $k=20$ & 20.1 & 23.0 & \textbf{24.2} &  & 24.8 & 26.2 & \textbf{28.4}\tabularnewline
mul-BIM $k=20$ & 19.4 & 22.6 & \textbf{23.7} &  & 24.5 & 25.9 & \textbf{28.1}\tabularnewline
mul-MIM $k=20$ & 20.3 & 23.1 & \textbf{24.1} &  & 25.1 & 26.4 & \textbf{28.6}\tabularnewline
SPSA & 24.1 & \textbf{32.1} & 31.8 &  & 32.5 & \textbf{35.1} & 35.0\tabularnewline
Auto-Attack & 14.8 & \textbf{22.0} & 21.9 &  & 23.0 & \textbf{26.1} & 25.9\tabularnewline
\hline 
\end{tabular}}
\par\end{centering}
}
\end{table}

In addition, we study the impact of each PO and DO to the final performance
by evaluating them separately. Table \ref{tab:ablation-separate}
shows the comparison when disabling one of these operations while
varying the other. It can be observed that: (i) the ensemble tends
to be detection mode (i.e., increasing natural performance and adversarial
detectability while sacrificing its robustness) when increasing DO's
strength ($\lambda_{dm}\geq2$), (ii) the ensemble tends to reduce
its robustness slightly when increasing PO's strength, (iii) neither
PO nor DO can improve the robustness alone, which shows the important
of the transferring flow. These observations are inline with the properties
of PO and DO which have been mentioned in the main paper. The parameter
$\lambda_{pm}(\lambda_{dm})$ controls the level of the agreement
(disagreement) of models $\{f^{i}\},i\in[1,N]$ and model $f^{j},j\neq i$
on the same adversarial example $x_{a}^{j}$. Therefore the observation
(i) can be explained by the fact that by disabling cPO ($\lambda_{pm}=0$)
and strengthening DO, our method encourages the disagreement among
members on the same data example, therefore, increases the negative
correlation among them. In contrast, by disabling DO and increasing
cPO's strength, our method increases the agreement among members,
therefore, increases the positive correlation. The increasing of the
positive correlation among members reduces the diversity of adversarial
space, therefore, explains the observation (ii).

\begin{table}
\caption{Ablation study on the impact of each operation PO/DO. We commonly
use $\epsilon=8/255,\eta=2/255$. Note that $mul\mathcal{A}$ represents
for the multiple-targeted attack by adversary $\mathcal{A}$.\label{tab:ablation-separate}}

\begin{centering}
\subfloat[Using DO only by disabling cPO ($\lambda_{pm}=0$)]{
\begin{centering}
\resizebox{0.4\textwidth}{!}{\centering\setlength{\tabcolsep}{2pt}
\begin{tabular}{ccccc}
\hline 
 & $\lambda_{dm}=0$ & $\lambda_{dm}=1$ & $\lambda_{dm}=2$ & $\lambda_{dm}=5$\tabularnewline
\hline 
Non-att (Nat. acc.) & 75.8 & 76.6 & 83.2 & 86.0\tabularnewline
PGD $k=100$ & 43.4 & 43.3 & 23.9 & 26.1\tabularnewline
BIM $k=100$ & 43.4 & 43.2 & 24.0 & 26.2\tabularnewline
MIM $k=100$ & 44.1 & 43.6 & 31.1 & 21.7\tabularnewline
mul-PGD $k=20$ & 31.1 & 29.9 & 14.9 & 20.1\tabularnewline
mul-BIM $k=20$ & 30.8 & 29.8 & 13.1 & 19.8\tabularnewline
mul-MIM $k=20$ & 31.5 & 30.3 & 28.6 & 21.7\tabularnewline
SPSA & 44.3 & 44.4 & 30.4 & 5.3\tabularnewline
Auto-Attack & 29.2 & 29.8 & 1.6 & 0.2\tabularnewline
\hline 
\end{tabular}}
\par\end{centering}
}
\par\end{centering}
\centering{}\subfloat[Using PO only by disabling DO ($\lambda_{dm}=0$)]{
\centering{}\resizebox{0.4\textwidth}{!}{\centering\setlength{\tabcolsep}{2pt}
\begin{tabular}{ccccc}
\hline 
 & $\lambda_{pm}=0$ & $\lambda_{pm}=1$ & $\lambda_{pm}=2$ & $\lambda_{pm}=5$\tabularnewline
\hline 
Non-att (Nat. acc.) & 75.8 & 76.4 & 76.2 & 77.1\tabularnewline
PGD $k=100$ & 43.4 & 42.7 & 42.7 & 41.3\tabularnewline
BIM $k=100$ & 43.4 & 42.8 & 42.7 & 41.5\tabularnewline
MIM $k=100$ & 44.1 & 43.0 & 43.2 & 41.9\tabularnewline
mul-PGD $k=20$ & 31.1 & 30.4 & 30.0 & 29.7\tabularnewline
mul-BIM $k=20$ & 30.8 & 30.3 & 29.9 & 29.6\tabularnewline
mul-MIM $k=20$ & 31.5 & 30.8 & 30.4 & 30.1\tabularnewline
SPSA & 44.3 & 44.3 & 45.5 & 46.2\tabularnewline
Auto-Attack & 29.2 & 29.7 & 28.8 & 30.0\tabularnewline
\hline 
\end{tabular}}}
\end{table}

\section{Black-box attacks evaluation}

We investigate the transferability of adversarial examples among models
and evaluate the robustness under black-box attacks. The experiment
is conducted on the CIFAR10 and CIFAR100 datasets, with ensemble of
two members. We use PGD to challenge each ensemble model to generate
adversarial examples then transfer these adversarial examples to attack
other models. The PGD configuration for the CIFAR10 dataset is $k=20,\eta=2/255,\epsilon\in\{8/255,12/255\}$,
while that for the CIFAR100 dataset is $k=20,\eta=0.001,\epsilon\in\{0.01,0.02\}$.
The result is shown in Figure \ref{fig:trans-bb}. The element $a^{(i,j)}$
in each sub-table represents the robust accuracy when adversarial
examples from model $i$ attack model $j$. $\emph{{NAT}}$ represents
for the natural model which does not engage with any defense method.
It is worth noting that, the diagonal in each sub-table represents
robust accuracies against the white-box attacks, which has been discussed
in the section above. 

Firstly, the first row in each sub-table shows the robust accuracy
against adversarial examples which are transferred from the natural
model ($\emph{{NAT}}$). This result shows that our method outperforms
baseline methods on the CIFAR100 dataset, but to be weaker than other
on the CIFAR10 dataset. Secondly, each column in each sub-table compares
the attack strength of different models on the same defense model.
The comparison on these columns shows that adversarial examples which
are crafted from our CCE-RM attack better than those crafted from
other methods (i.e., by giving lower robust accuracy). This result
indicates that our method generates stronger adversarial examples
than other methods. 

\begin{table}
\caption{Blackbox attack evaluation on CIFAR10 dataset}

\begin{centering}
\subfloat[PGD attack with $k=20,\epsilon=8/255,\eta=2/255$]{
\centering{}\resizebox{0.4\textwidth}{!}{\centering\setlength{\tabcolsep}{2pt}
\begin{tabular}{|c|>{\centering}p{1.5cm}|>{\centering}p{1.5cm}|>{\centering}p{1.5cm}|>{\centering}p{1.5cm}|}
\hline 
 & NAT & ADV-EN & ADP & CCE-RM\tabularnewline
\hline 
NAT & 9.2 & 76.2 & 75.6 & 74.5\tabularnewline
\hline 
ADV-EN & 59.5 & 40.7 & 58.5 & 58.4\tabularnewline
\hline 
ADP & 58.4 & 61.8 & 42.9 & 59.9\tabularnewline
\hline 
CCE-RM & 57.3 & 58.9 & 57.4 & 45.5\tabularnewline
\hline 
\end{tabular}}}
\par\end{centering}
\centering{}\subfloat[PGD attack with $k=20,\epsilon=12/255,\eta=2/255$]{
\centering{}\resizebox{0.4\textwidth}{!}{\centering\setlength{\tabcolsep}{2pt}
\begin{tabular}{|c|>{\centering}p{1.5cm}|>{\centering}p{1.5cm}|>{\centering}p{1.5cm}|>{\centering}p{1.5cm}|}
\hline 
 & NAT & ADV-EN & ADP & CCE-RM\tabularnewline
\hline 
NAT & 8.0 & 75.5 & 75.1 & 74.0\tabularnewline
\hline 
ADV-EN & 40.1 & 23.6 & 44.5 & 45.2\tabularnewline
\hline 
ADP & 38.4 & 48.5 & 24.9 & 47.1\tabularnewline
\hline 
CCE-RM & 35.9 & 42.9 & 41.3 & 27.8\tabularnewline
\hline 
\end{tabular}}}
\end{table}

\begin{table}
\caption{Blackbox attack evaluation on CIFAR100 dataset}

\begin{centering}
\subfloat[PGD attack with $k=20,\epsilon=0.01,\eta=0.001$]{
\centering{}\resizebox{0.4\textwidth}{!}{\centering\setlength{\tabcolsep}{2pt}
\begin{tabular}{|c|>{\centering}p{1.6cm}|>{\centering}p{1.6cm}|>{\centering}p{1.6cm}|>{\centering}p{1.6cm}|}
\hline 
 & NAT & ADV-EN & ADP & CCE-RM\tabularnewline
\hline 
NAT & 14.7 & 43.6 & 49.3 & 52.9\tabularnewline
\hline 
ADV-EN & 42.9 & 31.1 & 48.1 & 51.6\tabularnewline
\hline 
ADP & 41.8 & 41.6 & 32.8 & 50.9\tabularnewline
\hline 
CCE-RM & 39.4 & 40.2 & 45.8 & 36.1\tabularnewline
\hline 
\end{tabular}}}
\par\end{centering}
\centering{}\subfloat[PGD attack with $k=20,\epsilon=0.02,\eta=0.001$]{
\centering{}\resizebox{0.4\textwidth}{!}{\centering\setlength{\tabcolsep}{2pt}
\begin{tabular}{|c|>{\centering}p{1.6cm}|>{\centering}p{1.6cm}|>{\centering}p{1.6cm}|>{\centering}p{1.6cm}|}
\hline 
 & NAT & ADV-EN & ADP & CCE-RM\tabularnewline
\hline 
NAT & 13.0 & 43.6 & 49.2 & 52.9\tabularnewline
\hline 
ADV-EN & 39.9 & 24.3 & 46.9 & 50.4\tabularnewline
\hline 
ADP & 38.3 & 40.6 & 25.2 & 49.4\tabularnewline
\hline 
CCE-RM & 34.5 & 38.2 & 43.0 & 27.5\tabularnewline
\hline 
\end{tabular}}}
\end{table}

\section{Loss surface visualization }

In addition to the quantitative evaluation on the adversarial robustness,
we would like to provide two additional visualizations which further
demonstrate our improvement. The visualizations are conducted on the
same image from CIFAR10 dataset  with the ensemble of two models.
First, we visualize the prediction probability of each ensemble member
and the entire ensemble with the same two types of input which are
a benign example and adversarial example of the benign one. The visualization
as Figure \ref{fig:visual_pred_prob} shows that our method can produce
a high confident prediction unlike ADP which has a less confident
prediction because of its diversity promoting method. Secondly, we
visualize the loss surface around the adversarial example $\bx_{a}$
w.r.t three different of model:$f^{1},f^{2}$ and $f^{en}$. We generate
a grid of neighborhood images $\{\bx_{a}+i*u+j*v\}$ where $u=\nabla_{\bx}\mathcal{C}\left(f(\bx_{a}),\by\right)$
is the gradient of the prediction loss w.r.t the input and $v$ is
the random perpendicular vector to $u$. In each sub-figure, the left
image is the adversarial example of interest while the middle and
the right image depict the loss surface and the predicted labels corresponding
with the neighbor grid. Our method can produce correct labels in entire
the neighborhood region, unlike other methods that still have an incorrect
prediction region. Therefore, our method can produce a smoother surface
around the adversarial example which further explains the better robustness
in our method. 

\begin{figure}
\begin{centering}
\subfloat[ADV-EN. ]{\begin{centering}
\includegraphics[width=0.9\columnwidth]{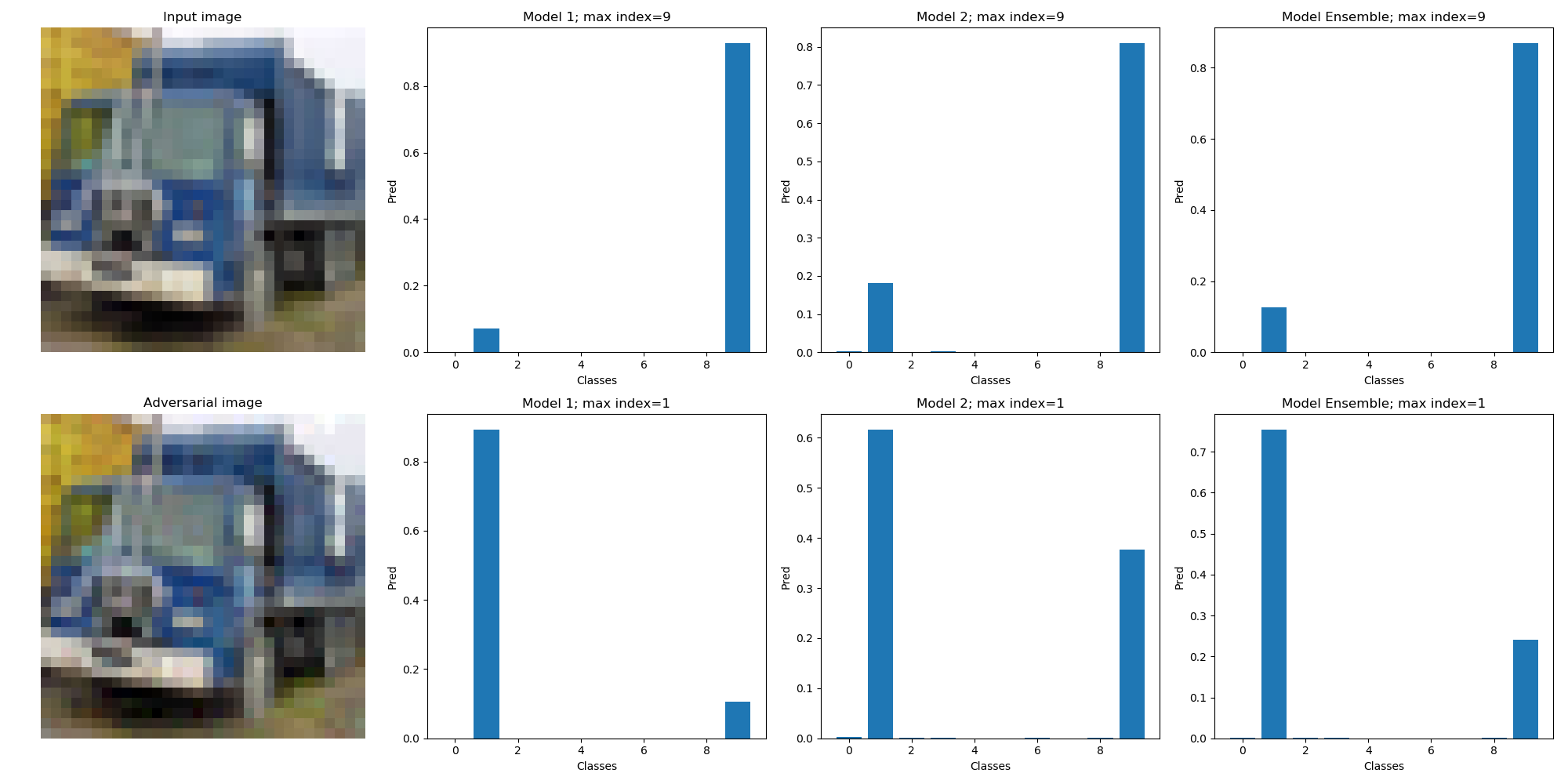}
\par\end{centering}

}\medskip{}
\subfloat[ADP.]{\begin{centering}
\includegraphics[width=0.9\columnwidth]{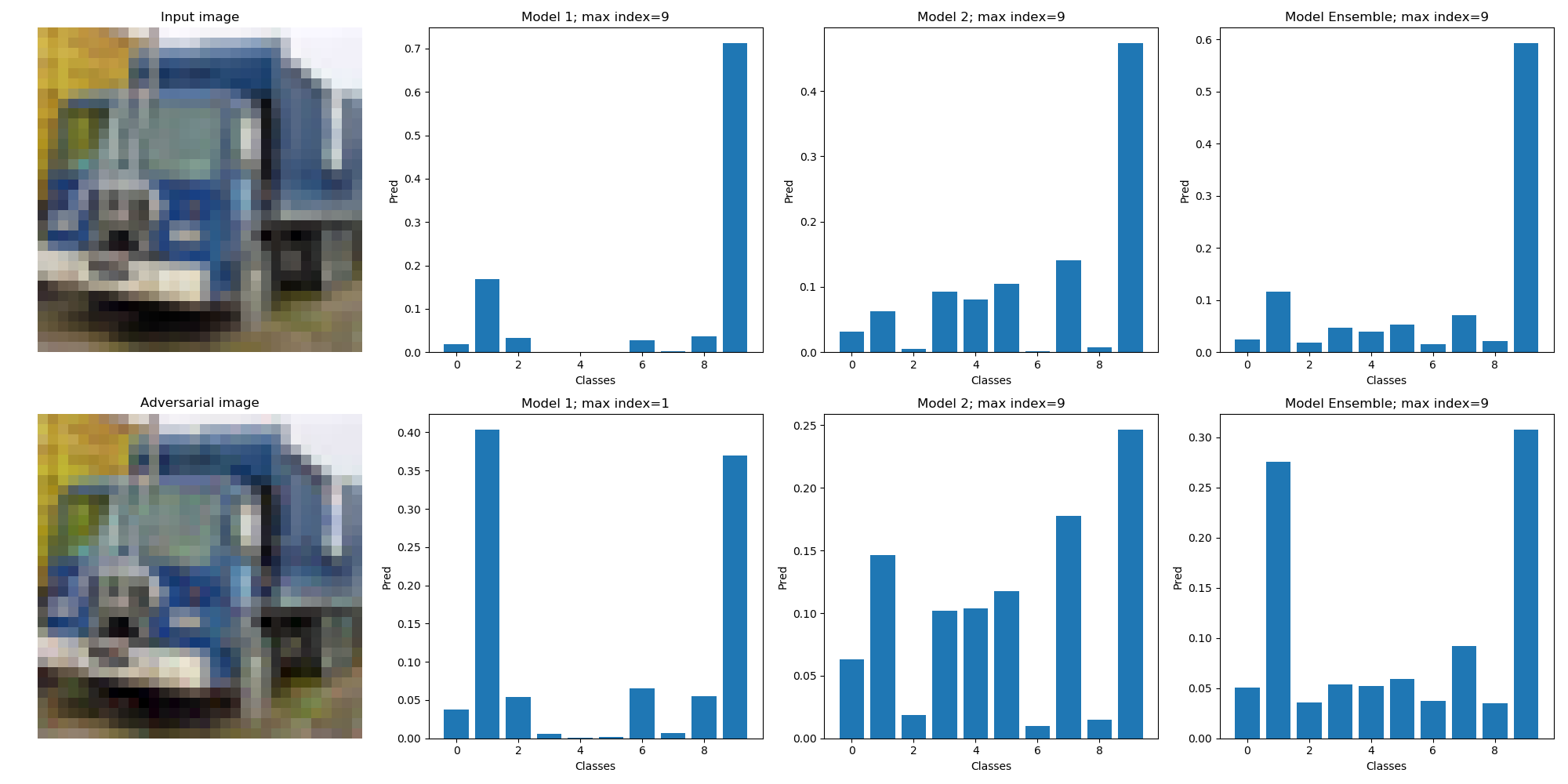}
\par\end{centering}
}\medskip{}
\subfloat[CCE-RM.]{\begin{centering}
\includegraphics[width=0.9\columnwidth]{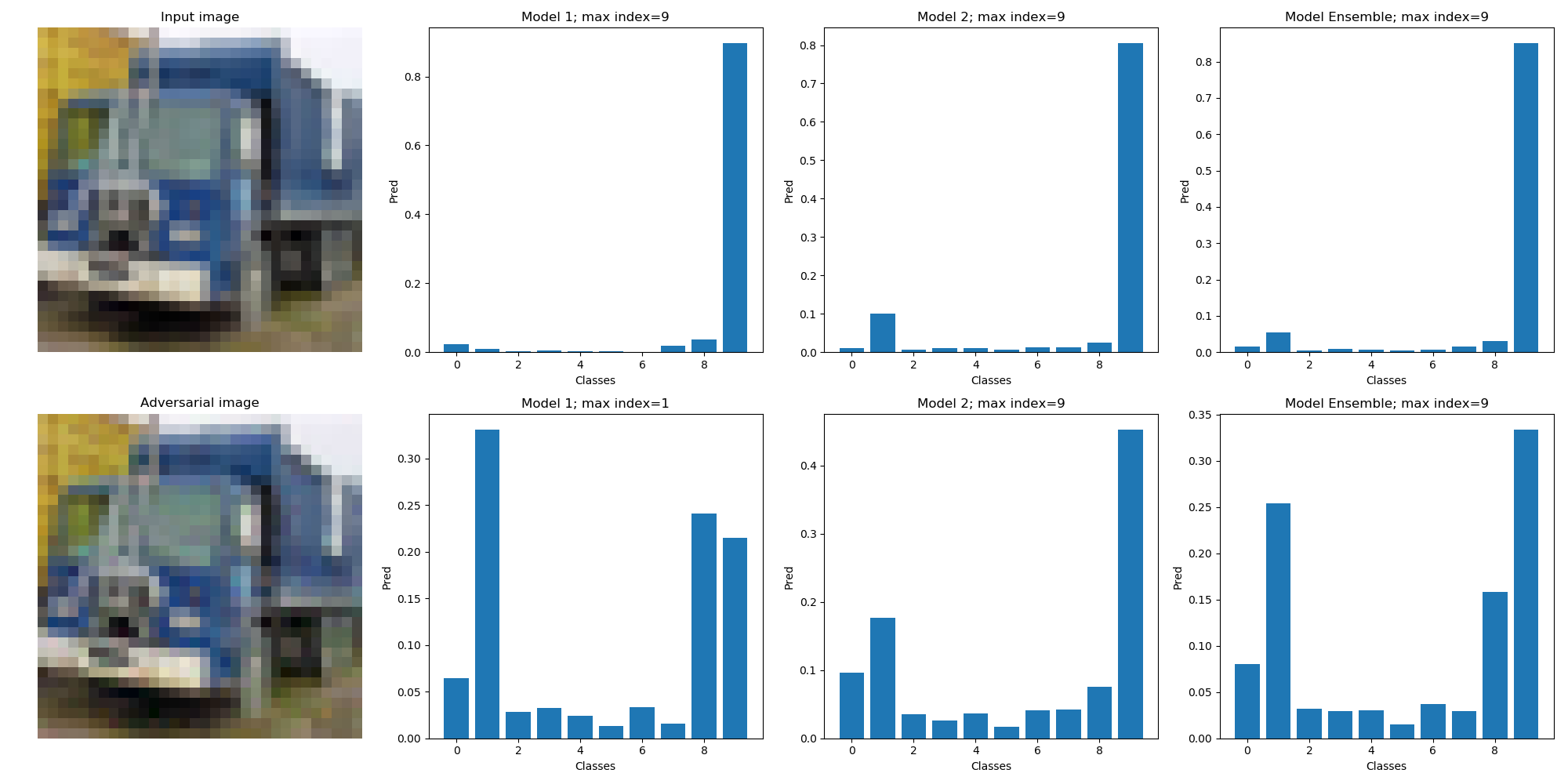}
\par\end{centering}

}
\par\end{centering}
\caption{Prediction example. Top/bottom images are benign/adversarial images.
Next columns are outputs from $f^{1},f^{2},f^{en}$.\label{fig:visual_pred_prob}}
\end{figure}

\begin{figure}
\begin{centering}
\subfloat[Prediction surface of model $f^{1}$.]{\begin{centering}
\includegraphics[width=1\columnwidth]{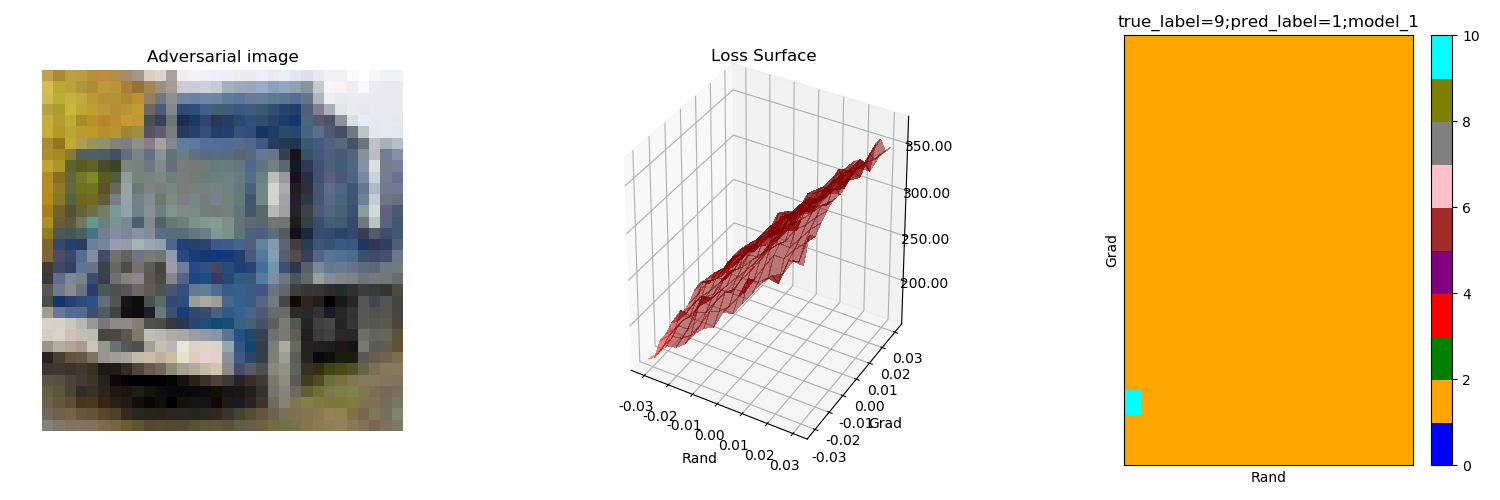}
\par\end{centering}
}\medskip{}
\subfloat[Prediction surface of model $f^{2}$.]{\begin{centering}
\includegraphics[width=1\columnwidth]{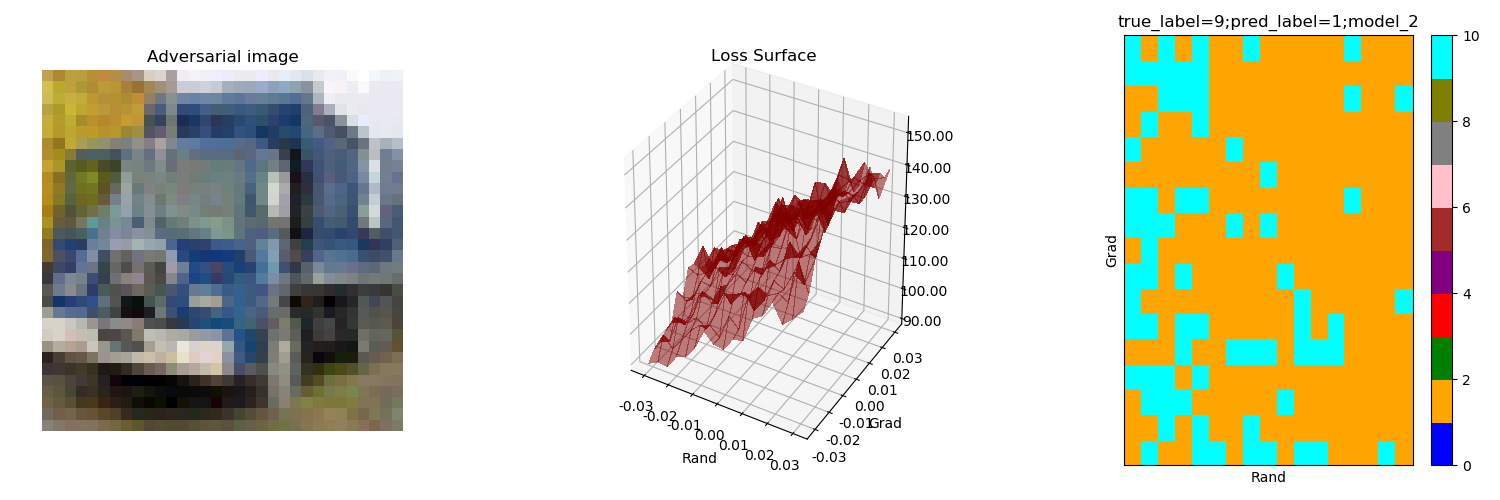}
\par\end{centering}
}\medskip{}
\subfloat[Prediction surface of model $f^{en}$.]{\begin{centering}
\includegraphics[width=1\columnwidth]{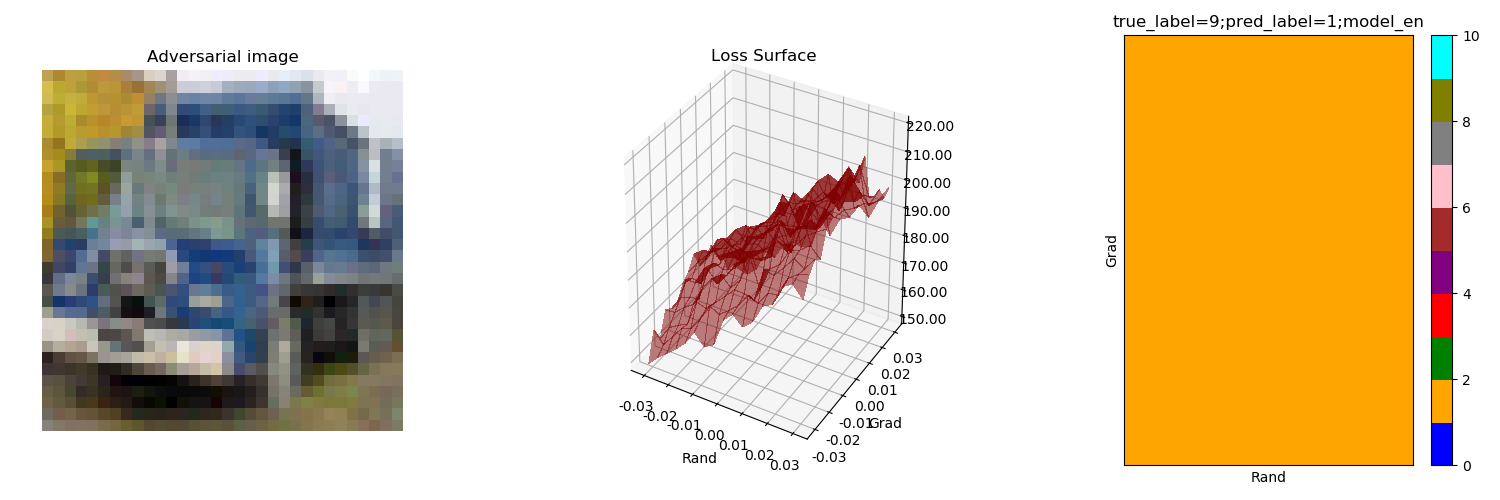}
\par\end{centering}
}
\par\end{centering}
\caption{Loss surface around adversarial example of ADV-EN method. Left: Adversarial
input. Middle: Loss surface. Right: Predicted labels.\label{fig:surface_adv}}
\end{figure}

\begin{figure}
\begin{centering}
\subfloat[Prediction surface of model $f^{1}$.]{\begin{centering}
\includegraphics[width=1\columnwidth]{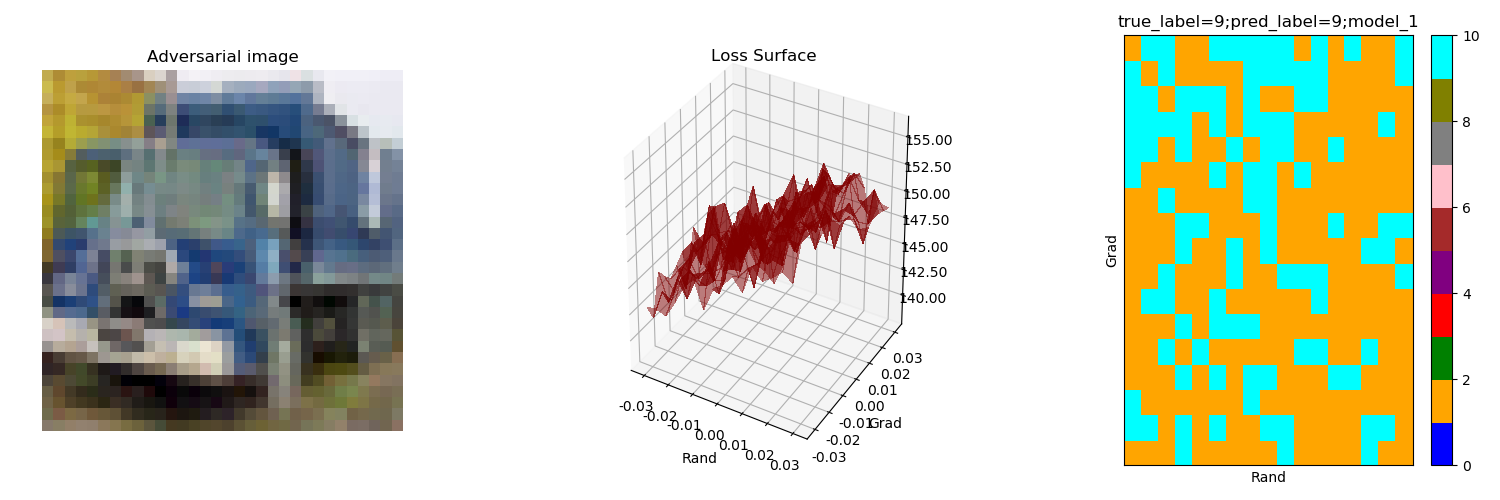}
\par\end{centering}
}\medskip{}
\subfloat[Prediction surface of model $f^{2}$.]{\begin{centering}
\includegraphics[width=1\columnwidth]{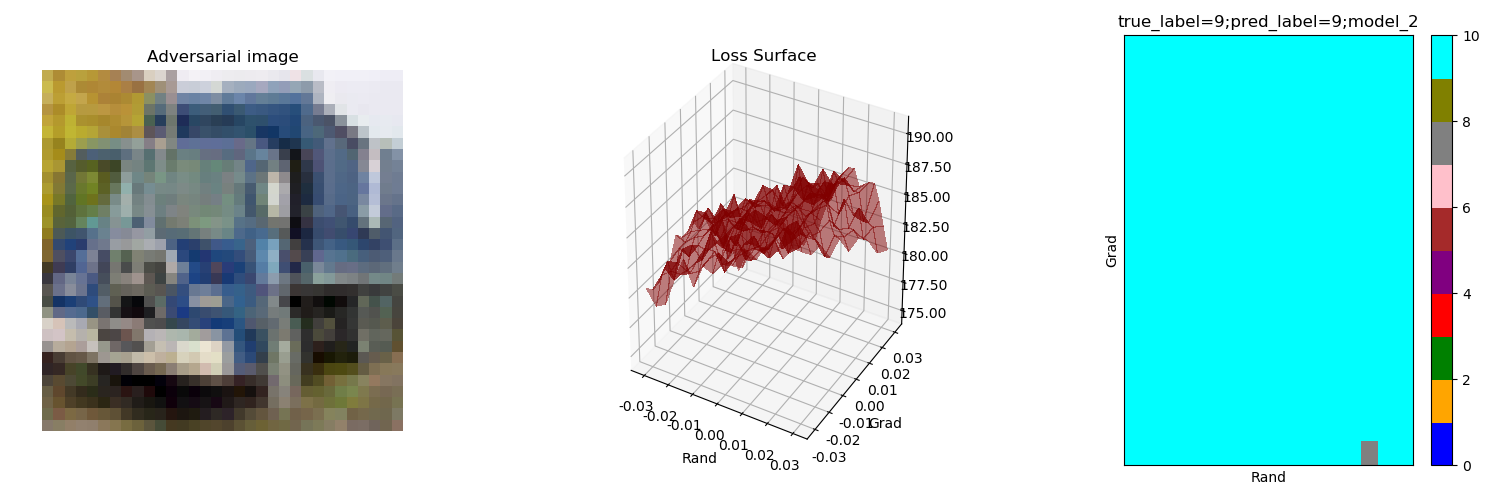}
\par\end{centering}
}\medskip{}
\subfloat[Prediction surface of model $f^{en}$.]{\begin{centering}
\includegraphics[width=1\columnwidth]{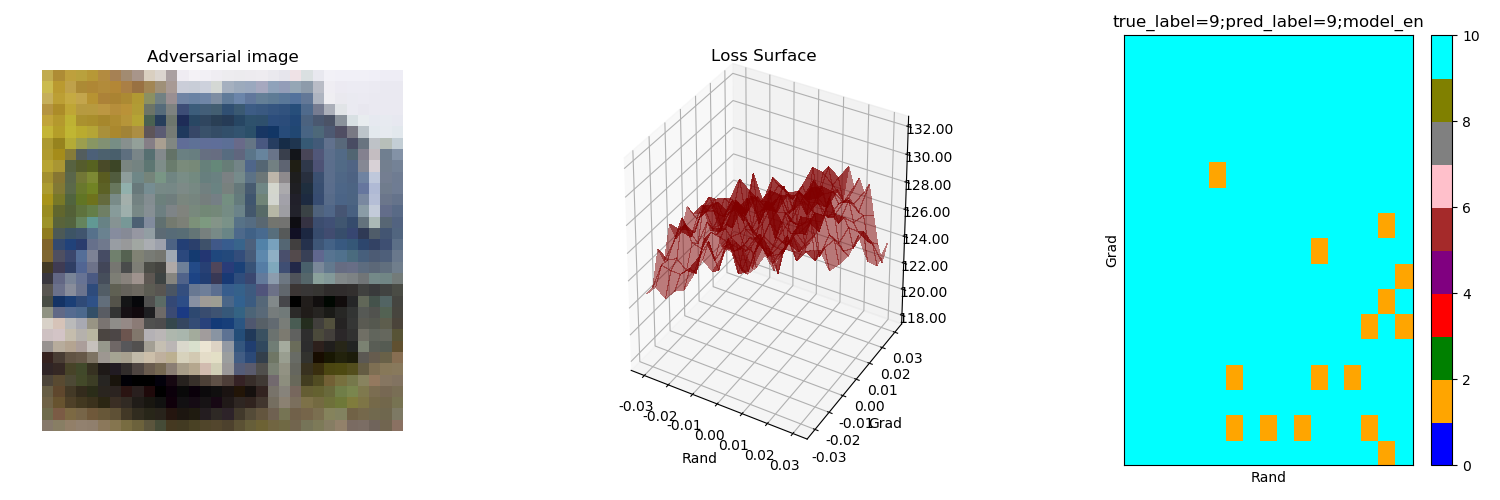}
\par\end{centering}
}
\par\end{centering}
\caption{Loss surface around adversarial example of ADP method. Left: Adversarial
input. Middle: Loss surface. Right: Predicted labels.\label{fig:surface_adp}}
\end{figure}

\begin{figure}
\begin{centering}
\subfloat[Prediction surface of model $f^{1}$.]{\begin{centering}
\includegraphics[width=1\columnwidth]{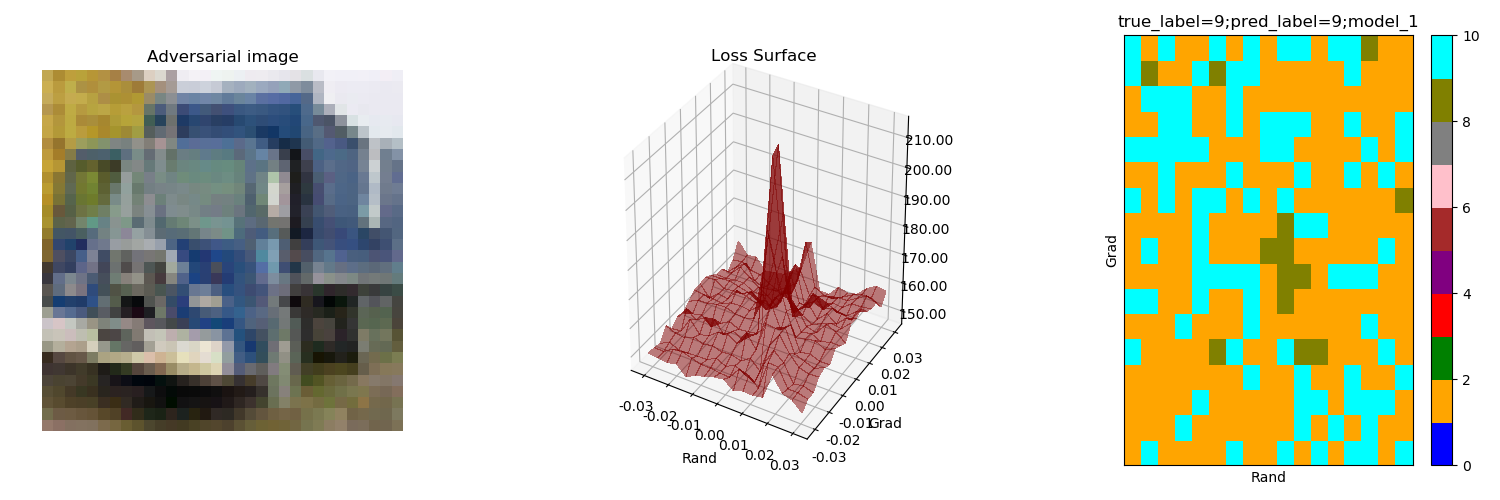}
\par\end{centering}
}\medskip{}
\subfloat[Prediction surface of model $f^{2}$.]{\begin{centering}
\includegraphics[width=1\columnwidth]{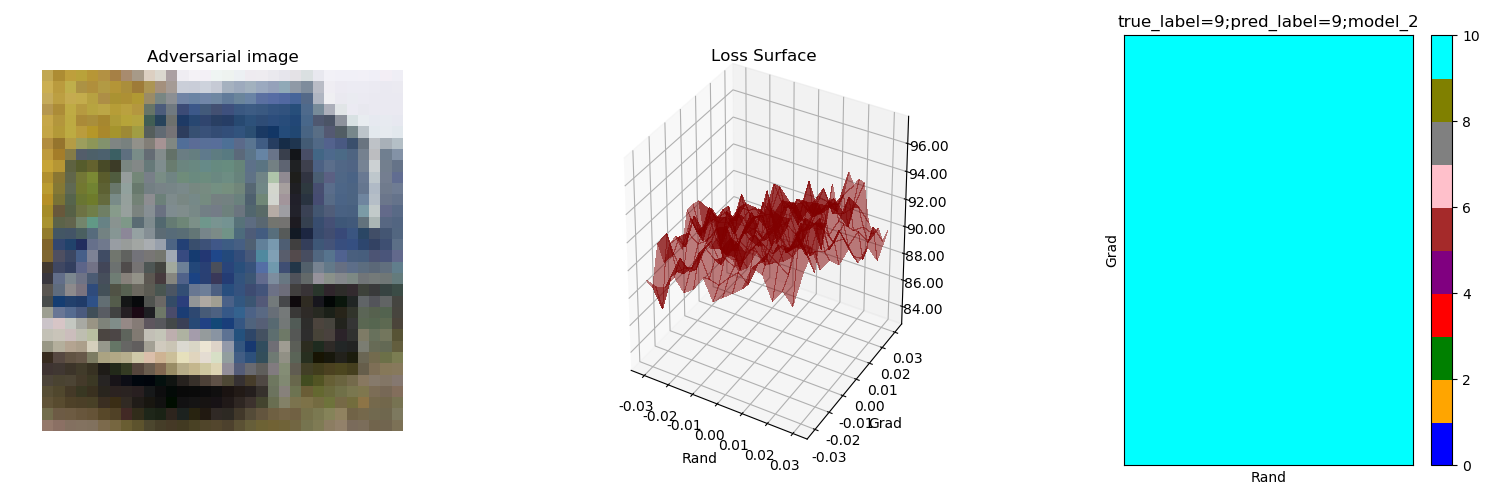}
\par\end{centering}
}\medskip{}
\subfloat[Prediction surface of model $f^{en}$.]{\begin{centering}
\includegraphics[width=1\columnwidth]{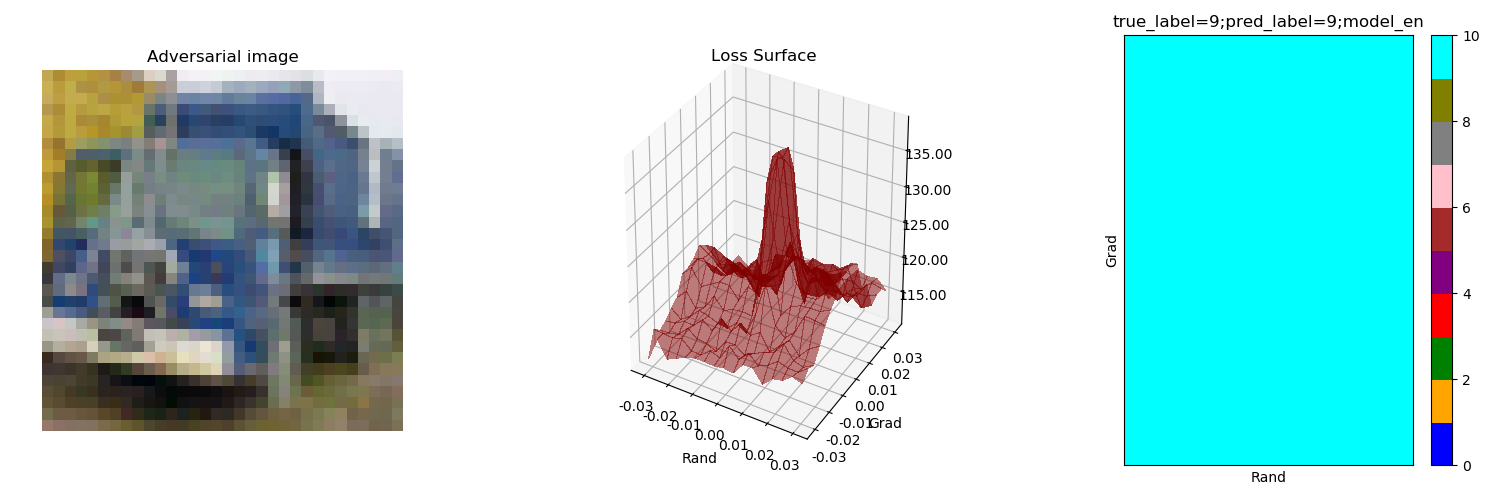}
\par\end{centering}
}
\par\end{centering}
\caption{Loss surface around adversarial example of CCE-RM method. Left: Adversarial
input. Middle: Loss surface. Right: Predicted labels.\label{fig:surface_ours}}
\end{figure}

\end{document}